\documentclass{article}

\usepackage[final]{neurips_2023}

\usepackage[utf8]{inputenc} %
\usepackage[T1]{fontenc}    %
\usepackage{hyperref}       %
\usepackage{url}            %
\usepackage{booktabs}       %
\usepackage{amsfonts}       %
\usepackage{nicefrac}       %
\usepackage{microtype}      %
\usepackage{xcolor}         %

\usepackage{amsmath}
\usepackage{amssymb}
\usepackage{mathtools}
\usepackage{amsthm}

\usepackage{algorithm}
\usepackage{algorithmic}
\usepackage{enumitem}
\usepackage[thinc]{esdiff}

\newtheorem{theorem}{Theorem}[section]

\newtheorem{definition}[theorem]{Definition}

\title{Neural Lyapunov Control for Discrete-Time Systems}

\author{%
  Junlin Wu \\
  Computer Science \& Engineering\\
  Washington University in St. Louis\\
  St. Louis, MO 63130 \\
  \texttt{junlin.wu@wustl.edu} \\
  \And
  Andrew Clark \\
  Electrical \& Systems Engineering\\
  Washington University in St. Louis\\
  St. Louis, MO 63130 \\
  \texttt{andrewclark@wustl.edu} \\
  \AND
  Yiannis Kantaros \\
  Electrical \& Systems Engineering\\
  Washington University in St. Louis\\
  St. Louis, MO 63130 \\
  \texttt{ioannisk@wustl.edu} \\
  \And
  Yevgeniy Vorobeychik \\
  Computer Science \& Engineering\\
  Washington University in St. Louis\\
  St. Louis, MO 63130 \\
  \texttt{yvorobeychik@wustl.edu} \\
}

\begin{document}

\maketitle

\begin{abstract}
While ensuring stability for linear systems is well understood, it remains a major challenge for nonlinear systems.
A general approach in such cases is
to compute a combination of a Lyapunov  function and an associated control policy. 
However, finding Lyapunov functions for general nonlinear systems is a challenging task. To address this challenge, several methods have been proposed that represent Lyapunov functions using neural networks. 
However, such approaches either focus on continuous-time systems, or highly restricted classes of nonlinear dynamics.
We propose the first approach for learning neural Lyapunov control in a broad class of discrete-time systems.
Three key ingredients enable us to effectively learn provably stable control policies.
The first is a novel mixed-integer linear programming approach for verifying the discrete-time Lyapunov stability conditions, leveraging the particular structure of these conditions.
The second is a novel approach for computing verified sublevel sets.
The third is a heuristic gradient-based method for quickly finding counterexamples to significantly speed up Lyapunov function learning.
Our experiments on four standard benchmarks demonstrate that our approach significantly outperforms state-of-the-art baselines.
For example, on the path tracking benchmark, we outperform recent neural Lyapunov control baselines by an order of magnitude in both running time and the size of the region of attraction, and on two of the four benchmarks (cartpole and PVTOL), ours is the first automated approach to return a provably stable controller. Our code is available at:
\url{https://github.com/jlwu002/nlc_discrete}.
\end{abstract}

\section{Introduction}

Stability analysis for dynamical systems aims to show that the system state will return to an equilibrium under small perturbations. 
Designing stable control in nonlinear systems commonly relies on constructing Lyapunov functions that can certify the stability of equilibrium points and estimate their region of attraction. 
However, finding Lyapunov
functions for arbitrary nonlinear systems is a challenging task that requires substantial expertise and manual effort \citep{khalil2015nonlinear,lavaei2023systematic}. 
To address this challenge, recent progress has been made in learning Lyapunov functions in \emph{continuous-time} nonlinear dynamics~\citep{abate2020formal,chang2019neural,zhou2022neural}. However, few approaches exist for doing so in discrete-time systems~\citep{dai2021lyapunov}, and none for general Lipschitz-continuous dynamics.
Since modern learning-based controllers often take non-negligible time for computation, the granularity of control is effectively discrete-time, and developing approaches for stabilizing such controllers is a major open challenge.

We propose a novel method to learn Lyapunov functions and stabilizing controllers,
represented as neural networks (NNs) with ReLU activation functions, for discrete-time nonlinear systems. 
The proposed framework broadly consists of a \emph{learner}, which uses a gradient-based method for updating the parameters of the Lyapunov function and policy, and a \emph{verifier}, which produces counterexamples (if any exist) to the stability conditions that are added as training data for the learner.
The use of a verifier in the learning loop is critical to enable the proposed approach to return a provably stable policy.
However, no prior approaches enable sound verification of neural Lyapunov stability for general discrete-time dynamical systems.
The closest is \citet{dai2021lyapunov}, who assume that dynamics are represented by a neural network, an assumption that rarely holds in real systems.
On the other hand, approaches for continuous-time systems~\citep{chang2019neural,zhou2022neural} have limited efficacy in discrete-time environments, as our experiments demonstrate.
To address this gap, we develop a novel verification tool that checks if the candidate NN Lyapunov function satisfies the Lyapunov conditions by solving Mixed Interger Linear Programs (MILPs).
This approach, which takes advantage of the structure of discrete-time Lyapunov stability conditions, can soundly verify a broad class of dynamical system models.

However, using a sound verification tool in the learning loop makes it a significant bottleneck, severely limiting scalability.
We address this problem by also developing a highly effective gradient-based technique for identifying counterexamples, resorting to the full MILP-based verifier only as a last resort.
The full learning process thereby iterates between learning and verification steps, and returns only when the sound verifier is able to prove that the Lyapunov function and policy satisfy the stability conditions.

The final technical challenge stems from the difficulty of verifying stability near the origin~\citep{chang2019neural}, typically addressed heuristically by either adding a fixed tolerance to a stability condition~\citep{dai2021lyapunov}, or excluding a small area around the origin from verification~\citep{chang2019neural,zhou2022neural}.
We address it by adapting Lyapunov stability theory to ensure convergence to a small region near the origin, thereby achieving the first (to our knowledge) sound approach for computing a stable neural controller that explicitly accounts for such approximations near the origin.

We evaluate the proposed approach in comparison to state-of-the-art baselines on four standard nonlinear control benchmarks.
On the two simpler domains (inverted pendulum and path following), our approach outperforms the state-of-the-art continuous-time neural Lyapunov control approaches by at least several factors and up to an order of magnitude \emph{in both running time and the size of the region of attraction}.
On the two more complex domains---cartpole and PVTOL---ours \emph{is the first automated approach that returns a provably stable controller}.
Moreover, our ablation experiments demonstrate that both the MILP-based verifier and heuristic counterexample generation technique we propose are critical to the success of our approach.

In summary, we make the following contributions:
\begin{itemize}[leftmargin=*,topsep=0pt,itemsep=-1ex,partopsep=1ex,parsep=1ex]
    \item A novel MILP-based approach for verifying a broad class of discrete-time controlled nonlinear systems.
    \item A novel approach for learning provably verified stabilizing controllers for a broad class of discrete-time nonlinear systems which combines our MILP-based verifier with a heuristic gradient-based approximate counterexample generation technique.
    \item A novel formalization of approximate stability in which the controller provably converges to a small ball near the origin in finite time.
    \item Extensive experiments on four standard benchmarks demonstrate that by leveraging the special structure of Lyapunov stability conditions for discrete-time system, our approach significantly outperforms prior art.
\end{itemize}

\smallskip
\noindent
\textbf{Related Work } 
Much prior work on learning Lyapunov functions focused on continuous-time systems~\citep{abate2020formal,ravanbakhsh2019learning,chang2019neural,zhou2022neural}.
Common approaches have assumed that dynamics are linear~\citep{donti2020enforcing,tedrake2009underactuated} or polynomial~\citep{ravanbakhsh2019learning}. 
Recently, \citet{chang2019neural}, \citet{rego2022learning}, and \citet{zhou2022neural} proposed learning Lyapunov functions represented as neural networks while restricting policies to be linear.
These were designed for continuous-time dynamics, and are not effective in discrete-time settings, as we show in the experiments.
\citet{chen2021Lyap} learns convex Lyapunov functions for discrete-time hybrid systems.
Their approach requires hybrid systems to admit a mixed-integer linear programming formulation, essentially restricting it to piecewise-affine systems, and does not learn stabilizing controllers for these.
Our approach considers a much broader class of dynamical system models, and learns provably stable controllers and Lyapunov functions, allowing both to be represented as neural networks.
\citet{kolter2019learning} learn stable nonlinear dynamics represented as neural networks, but do not provide stability guarantees with respect to the true underlying system.
In addition, several approaches have been proposed that either provide only probabilistic stability guarantees~\citep{berkenkamp2017safe,richards2018lyapunov}, or do not guarantee stability~\citep{choi2020reinforcement,han2020actor}.
Several recent approaches propose methods for verifying stability in dynamical discrete-time systems.
\citet{chen2021ROA} compute an approximate region of attraction of dynamical systems with neural network dynamics, but assume that Lyapunov functions are quadratic.
\citet{dai2021lyapunov} consider the problem of verifying Lyapunov conditions in discrete-time systems, as well as learning provably stable policies. 
{However, they assume that dynamics are represented by neural networks  with (leaky) ReLU activation functions}.
Our verification approach, in contrast, is for arbitrary Lipschitz continuous dynamics.

\section{Model}
We consider a discrete-time nonlinear dynamical system
\begin{align}
\label{E:dynsys}
x_{t+1} = f(x_t, u_t),
\end{align}
where $x_t \in \mathcal{X}$ is state in a domain $\mathcal{X}$ and $u_t$ the control input at time $t$.
We assume that $f$ is Lipschitz continuous with Lipschitz constant $L_f$.
This class of dynamical systems includes the vast majority of (non-hybrid) dynamical system models in prior literature.
We assume that $x = 0$ is an equilibrium point for the system, that is, $f(0,u_0)=0$ for some $u_0$. 
Let $\pi(x)$ denote a control policy, with $u_t = \pi(x_t)$.
For example, in autonomous vehicle path tracking, $x$ can measure path tracking error and $u$ the steering angle.
In the conventional setup, the goal is to learn a control policy
$\pi$ such that the dynamical system in Equation~\eqref{E:dynsys} converges to the equilibrium $x=0$, a condition referred to as \emph{stability}. To this end, we can leverage the Lyapunov stability framework~\citep{tedrake2009underactuated}. Specifically, the goal is to identify a Lyapunov function $V(x)$ and controller $\pi$ that satisfies the following conditions over a subset of the domain $\mathcal{R} \subseteq \mathcal{X}$: $1) V(0) = 0$; $2) V(x) > 0, \forall x \in \mathcal{R}\setminus \{0\}$; and  $3) V(f(x,\pi(x))) - V(x) < 0, \forall x \in \mathcal{R}$. These conditions imply that the system is (locally) stable in the Lyapunov sense~\citep{bof2018lyapunov, tedrake2009underactuated}.

In practice, due to numerical challenges in verifying stability conditions near the origin, it is common to verify slight relaxations of the Lyapunov conditions.
These typically fall into two categories: 1) Lyapunov conditions are only checked for $\|x\|_p \geq \epsilon$ \citep{chang2019neural,zhou2022neural} (our main baselines, and the only other methods for learning neural Lyapunov control for general nonlinear dynamics), and/or 2) a small tolerance $\delta > 0$ is added in verification, allowing small violations of Lyapunov conditions near the origin~\citep{dai2021lyapunov}.

Our goal now is to formally investigate the implications of numerical approximations of this kind.
We next define a form of stability which entails finite-time convergence to $\mathcal{B}(0,\epsilon)= \{x \mid \|x\|_\infty < \epsilon\}$.

\begin{definition}[$\epsilon$-stability]
We call pair of $(V, \pi)$ $\epsilon$-stable within a region $\mathcal{R}$ when the following conditions are satisfied: 
(a) $ {V(0) = 0}$; 
(b) there exists ${\zeta > 0}$ such that $V(f(x,\pi(x))) - V(x) < -\zeta$ for all  $x \in \mathcal{R}\setminus \mathcal{B}(0,\epsilon)$; %
and (c) 
$ V(x) > 0$ for all $x \in \mathcal{R}\setminus \mathcal{B}(0,\epsilon)$.
\end{definition}

A key ingredient to achieving stability 
is to identify a \emph{region of attraction (ROA)}, within which we are guaranteed to converge to the origin from any starting point.
In the context of $\epsilon$-stability, our goal is to converge to a small ball near the origin, rather than the origin; let $\epsilon$-ROA be the set of initial inputs that has this property.
In order to enable us to  identify an $\epsilon$-ROA, we introduce a notion of an invariant sublevel set.
Specifically, we refer to a set $\mathcal{D}(\mathcal{R},\rho)= \{x \in \mathcal{R}|V(x) \le \rho\}$ with the property that $x \in \mathcal{D}(\mathcal{R},\rho) \Rightarrow  f(x,\pi(x)) \in \mathcal{R}$ as a \emph{$\mathcal{R}$-invariant sublevel set}.
In other words, $\mathcal{D}(\mathcal{R},\rho)$ is a sublevel set which is additionally forward invariant with respect to $\mathcal{R}$.
We assume that $\mathcal{B}(0,\epsilon) \subset \mathcal{D}(\mathcal{R},\rho)$.

Next, we formally prove that $\epsilon$-stability combined with a sublevel set $\mathcal{D}(\mathcal{R},\rho)$ entails convergence in three senses: 1) that we reach an $\epsilon$-ball around the origin in finite time, 2) that we reach it infinitely often, and 3) we converge to an arbitrarily small ball around the origin.
We refer to the first of these senses as \emph{reachability}.
What we show is that for $\epsilon$ sufficiently small, reachability implies convergence in all three senses.
For this, a key additional assumption is that $V$ and $\pi$ are Lipschitz continuous, with Lipschitz constants $L_v$ and $L_\pi$, respectively (e.g., in ReLU neural networks).
\begin{theorem}
\label{L:roa}
Suppose $V$ and $\pi$ are $\epsilon$-stable on a compact $\mathcal{R}$, $\mathcal{D}(\mathcal{R},\rho)$ is an $\mathcal{R}$-invariant set, and $\exists c_1$ such that $\|\pi(0)-u_0\|_\infty \leq c_1\epsilon$.
Then if $x_0 \in \mathcal{D}(\mathcal{R}, \rho)\setminus \mathcal{B}(0,\epsilon)$:
\begin{enumerate}[label=(\roman*),topsep=0pt,itemsep=-1ex,partopsep=1ex,parsep=1ex] %
\item there exists a finite $K$
such that $x_K \in \mathcal{B}(0,\epsilon)$,
\item $\exists c_2$ such that if $c_2 \epsilon < \rho$ and $\mathcal{B}(0,c_2 \epsilon) \subset \mathcal{R}$,
then there exists a finite $K$ such that  $\forall k \ge K$, $x_k \in \mathcal{D}(\mathcal{R},c_2 \epsilon)$
and the sequence $\{x_k\}_{k \ge 0}$ reaches $\mathcal{B}(0,\epsilon)$ infinitely often, and furthermore
\item for any $\eta>0$ such that $\mathcal{B}(0,\eta) \subset \mathcal{R}$,  $\exists \epsilon$ and finite $K$ such that $\|\pi(0)-u_0\|_\infty \leq c_1\epsilon \Rightarrow\|x_k\|_\infty \le \eta \ \forall k \ge K$.
\end{enumerate}
\end{theorem}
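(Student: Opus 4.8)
The plan is a two-scale Lyapunov-descent argument: the outer sublevel set $\mathcal{D}(\mathcal{R},\rho)$ acts as a trapping region that funnels the trajectory into $\mathcal{B}(0,\epsilon)$ in finite time (part (i)), and a much smaller, Lipschitz-sized sublevel set $\mathcal{D}(\mathcal{R},c_2\epsilon)$ then absorbs the trajectory permanently, with $\mathcal{B}(0,\epsilon)$ revisited infinitely often (part (ii)); finally, compactness of $\mathcal{R}$ turns ``trapped in a small sublevel set'' into ``trapped in a small ball'' (part (iii)). The only new inputs beyond the standard discrete Lyapunov argument are (a) the $\mathcal{R}$-invariance of $\mathcal{D}(\mathcal{R},\rho)$, which lets the descent argument stay inside the domain where the conditions hold, and (b) Lipschitz continuity of $f,V,\pi$ together with $f(0,u_0)=0$, $V(0)=0$ and $\|\pi(0)-u_0\|_\infty\le c_1\epsilon$, which control one step taken from inside $\mathcal{B}(0,\epsilon)$.

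For (i), I would first show that $\mathcal{D}(\mathcal{R},\rho)\setminus\mathcal{B}(0,\epsilon)$ maps into $\mathcal{D}(\mathcal{R},\rho)$: if $x_k\in\mathcal{D}(\mathcal{R},\rho)\setminus\mathcal{B}(0,\epsilon)$ then $x_{k+1}=f(x_k,\pi(x_k))\in\mathcal{R}$ by $\mathcal{R}$-invariance, and $V(x_{k+1})\le V(x_k)-\zeta\le\rho-\zeta<\rho$ by $\epsilon$-stability~(b), so $x_{k+1}\in\mathcal{D}(\mathcal{R},\rho)$. Iterating from $x_0$, as long as the trajectory has not entered $\mathcal{B}(0,\epsilon)$ it stays in $\mathcal{D}(\mathcal{R},\rho)$ and $V$ drops by at least $\zeta$ per step, while $V(x_k)>0$ there by $\epsilon$-stability~(c); hence $0<V(x_0)-k\zeta\le\rho-k\zeta$, forcing $k\le\rho/\zeta$, so $x_K\in\mathcal{B}(0,\epsilon)$ for some $K\le\lceil\rho/\zeta\rceil$.

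For (ii), the key is the one-step estimate from inside the ball: if $\|x_k\|_\infty<\epsilon$ then $\|\pi(x_k)-u_0\|_\infty\le L_\pi\|x_k\|_\infty+\|\pi(0)-u_0\|_\infty\le(L_\pi+c_1)\epsilon$, so $\|x_{k+1}\|_\infty=\|f(x_k,\pi(x_k))-f(0,u_0)\|_\infty\le L_f(1+L_\pi+c_1)\epsilon$ and hence $V(x_{k+1})=|V(x_{k+1})-V(0)|\le L_vL_f(1+L_\pi+c_1)\epsilon$; take $c_2:=\max(1,L_v)L_f(1+L_\pi+c_1)$, so both $\|x_{k+1}\|_\infty\le c_2\epsilon$ and $V(x_{k+1})\le c_2\epsilon$, which with $x_{k+1}\in\mathcal{R}$ (either from $\mathcal{B}(0,c_2\epsilon)\subset\mathcal{R}$ or from $\mathcal{B}(0,\epsilon)\subset\mathcal{D}(\mathcal{R},\rho)$ and invariance) gives $x_{k+1}\in\mathcal{D}(\mathcal{R},c_2\epsilon)$. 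Since $c_2\epsilon<\rho$, $\mathcal{D}(\mathcal{R},c_2\epsilon)\subseteq\mathcal{D}(\mathcal{R},\rho)$ inherits $\mathcal{R}$-invariance and the descent bound, so the exact reasoning of part (i) applies verbatim with $\rho$ replaced by $c_2\epsilon$. Then, once the trajectory first hits $\mathcal{B}(0,\epsilon)$ (at time $\le\lceil\rho/\zeta\rceil$ by (i)), an induction splitting on whether $x_k\in\mathcal{B}(0,\epsilon)$ (use the one-step estimate) or $x_k\in\mathcal{D}(\mathcal{R},c_2\epsilon)\setminus\mathcal{B}(0,\epsilon)$ (use descent inside $\mathcal{D}(\mathcal{R},c_2\epsilon)$) shows $x_k\in\mathcal{D}(\mathcal{R},c_2\epsilon)$ for all $k\ge K$; and ``infinitely often'' follows by contradiction, since if $x_k\notin\mathcal{B}(0,\epsilon)$ for all large $k$ then $V(x_k)$ would decrease by $\zeta$ at every step while remaining positive. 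I expect this trapping step to be the main obstacle: because the trajectory can oscillate in and out of $\mathcal{B}(0,\epsilon)$, one must show the re-entry step never escapes $\mathcal{D}(\mathcal{R},c_2\epsilon)$ (this is exactly where $c_1$ and the Lipschitz constants enter) and must verify that $\mathcal{D}(\mathcal{R},c_2\epsilon)$ itself satisfies the hypotheses needed to re-run the part-(i) argument.

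For (iii), fix $\eta>0$ with $\mathcal{B}(0,\eta)\subset\mathcal{R}$. The set $\mathcal{R}\setminus\mathcal{B}(0,\eta)$ is compact, and (taking $\epsilon\le\eta$ so that $\epsilon$-stability~(c) applies on it) $V$ is continuous and strictly positive there, so $m:=\min_{x\in\mathcal{R}\setminus\mathcal{B}(0,\eta)}V(x)>0$; note $m$ and $c_2$ depend only on $\eta$ and the fixed data, not on $\epsilon$. Now choose $\epsilon$ small enough that $\epsilon\le\eta$, $c_2\epsilon<\rho$, $c_2\epsilon<\eta$ (hence $\mathcal{B}(0,c_2\epsilon)\subset\mathcal{B}(0,\eta)\subset\mathcal{R}$), and $c_2\epsilon<m$; under the standing hypothesis $\|\pi(0)-u_0\|_\infty\le c_1\epsilon$, part (ii) gives a finite $K$ with $x_k\in\mathcal{D}(\mathcal{R},c_2\epsilon)$ for all $k\ge K$, and $V(x_k)\le c_2\epsilon<m$ with $x_k\in\mathcal{R}$ forces $x_k\notin\mathcal{R}\setminus\mathcal{B}(0,\eta)$, i.e.\ $\|x_k\|_\infty<\eta$ for all $k\ge K$, as required.
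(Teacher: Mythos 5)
Your proposal is correct and takes essentially the same route as the paper's proof: the same $\zeta$-descent argument for (i), the same Lipschitz one-step estimate, choice of $c_2$, and case-split induction for (ii), and for (iii) the same underlying fact that $V$ is bounded away from zero on the compact set $\mathcal{R}\setminus\mathcal{B}(0,\eta)$. The only cosmetic differences are that you argue (i) directly rather than by contradiction and obtain the positive lower bound in (iii) via the extreme value theorem, where the paper uses a sequential-compactness contradiction.
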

\begin{proof}
We prove (i) by contradiction.
Suppose that $\|x_{k}\|_{\infty} > \epsilon \ \forall k \in \left\{0,\ldots, \lceil V(x_0)/\zeta \rceil \right\}$. 
Then, when $k=\lceil V(x_0)/\zeta \rceil$, condition (b) of $\epsilon$-stability and $\mathcal{R}$-invariance of $\mathcal{D}(\mathcal{R},\rho)$ implies that
${V(x_{k}) < V(x_0) - \zeta k < 0}$, contradicting  condition (c) of $\epsilon$-stability. 

To prove (ii), fix the finite $K$ from (i), and let $k \ge K$ be such that $x_k \in \mathcal{B}(0,\epsilon)$.
By Lipschitz continuity of $V$ and the fact that $V(0)=0$ and $V(x) \ge 0$ (conditions (a) and (c) of $\epsilon$-stability), $V(x_{k+1}) \le L_v \|x_{k+1}\|$, where $\|\cdot\|$ is the $\ell_\infty$ norm here and below.
Moreover, since $x_{k+1} = f(x_k,\pi(x_k))$, $f(0,u_0) = 0$ (stability of the origin), and
by Lipschitz continuity of $f$ and $\pi$, 
\begin{align*}
\|x_{k+1}\|
\le L_f\|(x_k,\pi(x_k)-u_0)\| \le L_f\max\{\|x_k\|,\|\pi(x_k)-u_0\|\}.
\end{align*}
By Lipschitz continuity of $\pi$, triangle inequality, and the condition that $\|\pi(0)-u_0\| \le c_1 \epsilon$, we have
\(
\|\pi(x_k)-u_0\| \le L_\pi \|x_k\| + c_1 \epsilon \le (L_\pi + c_1)\epsilon.
\)
Let $c_2 = \max\{L_v,1\}L_f \max\{1, L_\pi + c_1\}$.
Then $\|x_{k+1}\| \le c_2 \epsilon$ and $V(x_{k+1}) \le c_2 \epsilon$.
Thus, if $c_2 \epsilon < \rho$ and $\mathcal{B}(0,c_2\epsilon) \subset \mathcal{R}$, and $\mathcal{D}(\mathcal{R},\rho)$ is $\mathcal{R}$-invariant, then $x_{k + 1} \in \mathcal{D}(\mathcal{R},c_2\epsilon)$ which is $\mathcal{R}$-invariant.
Additionally, either $x_{k+1} \in \mathcal{B}(0,\epsilon)$, and by the argument above $x_{k+2} \in \mathcal{D}(\mathcal{R},c_2\epsilon)$, or $x_{k+1} \in \mathcal{D}(\mathcal{R},c_2\epsilon) \setminus \mathcal{B}(0,\epsilon)$, %
and by $\epsilon$-stability  $x_{k+2} \in \mathcal{D}(\mathcal{R},c_2\epsilon)$.
Thus, by induction, we have that for all $k\ge K$, $x_k \in \mathcal{D}(\mathcal{R},c_2\epsilon)$.
Finally, since for all $k \ge K$, $x_k \in \mathcal{D}(\mathcal{R},c_2\epsilon)$, if $x _k \notin \mathcal{B}(0,\epsilon)$, by (i) it must reach $\mathcal{B}(0,\epsilon)$ in finite time.
Consequently, there is an infinite subsequence $\{x_{k'}\}$ of $\{x_k\}_{k\ge 0}$ such that all $x_{k'} \in \mathcal{B}(0,\epsilon)$, that is, the sequence $\{x_k\}_{k\ge 0}$ reaches $\mathcal{B}(0,\epsilon)$ infinitely often.

We prove part (iii) by contradiction.
Fix $\eta > 0$ and define 
$S = \{x \in \mathcal{R}: {\|x\| > \eta}\}$. 
Suppose that 
    $\forall \epsilon > 0$ there exists $x \in S$ such that ${V(x) \leq c_2 \epsilon}$ where $c_2$ is as in (ii).
    Then for any (infinite) sequence of $\{\epsilon_t\}$ we have $\{x_t\}$ such that ${V(x_t) \leq c_2 \epsilon_t}$, where $x_t \in S$. 
    Now, consider a set $\bar{S} = \{x \in \mathcal{R}:\|x\| \geq \eta\}$. Since $\bar{S}$ is compact and $\{x_t\}$ is an infinite sequence, there is an infinite subsequence $\{x_{t_k}\}$ of $\{x_t\}$ 
    such that $\lim_{k\xrightarrow{}\infty}x_{t_k} =x^*$ and $x^* \in \bar{S}$. 
    Since $V$ is continuous, we have $\lim_{k\xrightarrow{}\infty}V(x_{t_k}) =V(x^*)$. 
    Now, we choose $\{\epsilon_t\}$ such that $\lim_{t\xrightarrow{}\infty}\epsilon_t = 0$.
    This means that  $\lim_{k\xrightarrow{}\infty}V(x_{t_k})=V(x^*) \leq 0$, and since 
    $x^* \in \bar{S}$, this contradicts condition (c) of $\epsilon$-stability. 
    Since %
    by (ii),  there exists a finite $K$, $V(x_k) \le c_2 \epsilon, \forall k\ge K$, we have $\forall k \ge K, \|x_k\| \le \eta$.
\end{proof}

The crucial implication of Theorem~\ref{L:roa} is that as long as we choose $\epsilon>0$ sufficiently small, verifying that $V$ and $\pi$ are $\epsilon$-stable together with identifying an $\mathcal{R}$-invariant set $\mathcal{D}(\mathcal{R},\rho)$ suffices for convergence arbitrarily close to the origin.
One caveat is that we need to ensure that $\pi(0)$ is sufficiently close to $u_0$ for any $\epsilon$ we choose.
In most domains, this can be easily achieved: for example, if $u_0=0$ (as is the case in many settings, including three of our four experimental domains), we can use a representation for $\pi$ with no bias terms, so that $\pi(0) = 0 = u_0$ by construction.
In other cases, we can simply check this condition after learning.

Another caveat is that we need to define $\mathcal{R}$ to enable us to practically achieve these properties.
To this end, we define $\mathcal{R}$ parametrically as $\mathcal{R}(\gamma) = \{x \in \mathcal{X} \mid \|x\|_\infty \le \gamma\}$.
Additionally, we introduce the following useful notation.
Define $\mathcal{R}(\epsilon,\gamma) = \{x \in \mathcal{X} \mid \epsilon \le \|x\|_\infty \le \gamma\}$.
Note that $\mathcal{R}(0, \infty) = \mathcal{X}$, $\mathcal{R}(\epsilon, \infty) = \{x\in \mathcal{X} \mid \|x\|_\infty \geq \epsilon\}$ and $\mathcal{R}(0, \gamma) = \mathcal{R}(\gamma)$.
Thus, for $\gamma$ sufficiently large compared to $\epsilon$, conditions such as $\mathcal{B}(0,\epsilon) \subset \mathcal{D}(\mathcal{R},\rho)$, $\mathcal{B}(0,\eta) \subset \mathcal{R}$, and $\mathcal{B}(0,c_2 \epsilon) \subset \mathcal{R}$ will be easy to satisfy.
Following conventional naming, we denote 
$\mathcal{R}(\epsilon, \gamma)$  as \emph{$\epsilon$-valid region, i.e., the region that satisfies conditions $(a) - (c)$ of $\epsilon$-stability, and refer to a function $V$ satisfying these conditions as an $\epsilon$-Lyapunov function.}

We assume that the $\epsilon$-Lyapunov function as well as the policy can be represented in a parametric function class, such as by a deep neural network.
Formally, we denote the parametric $\epsilon$-Lyapunov function by $V_\theta(x)$ and the policy by $\pi_\beta(x)$, where $\theta$ and $\beta$ are their respective parameters.
Let $\mathcal{D}(\gamma,\rho) \equiv \mathcal{D}(\mathcal{R}(\gamma),\rho)$.
Our goal is to learn $V_\theta$ and $\pi_\beta$ such we maximize the size of the $\mathcal{R}(\gamma)$-invariant sublevel set $\mathcal{D}(\gamma,\rho)$ such that $V_\theta$ and $\pi_\beta$ are provably $\epsilon$-stable on $\mathcal{R}(\gamma)$.
Armed with Theorem~\ref{L:roa}, we refer to this set simply as ROA below for simplicity and for consistency with prior work, e.g.,~\citep{chang2019neural,zhou2022neural}, noting all the caveats discussed above.

Define $\mathcal{P}(\gamma)$ as the set ($\theta,\beta$) 
for which the conditions (a)-(c) of $\epsilon$-stability are satisfied over the domain $\mathcal{R}(\gamma)$.
Our main challenge below is to find $(\theta,\beta) \in \mathcal{P}(\gamma)$ for a given $\gamma$.
That, in turn, entails solving the key subproblem of verifying these conditions for given $V_\theta$ and $\pi_\beta$.

Next, in Section \ref{S:verify} we  address the verification problem, and  in Section \ref{S:learning} we describe our approach for jointly learning $V_\theta$ and $\pi_\beta$ that can be verified to satisfy the $\epsilon$-stability conditions for a given $\mathcal{R}(\gamma)$.

\section{Verifying Stability Conditions}
\label{S:verify}

Prior work on learning $\epsilon$-Lyapunov functions for continuous-time nonlinear control problems has leveraged off-the-shelf SMT solvers, such as dReal~\citep{gao2013dreal}.
However, these solvers scale poorly in our  setting (%
see Supplement \ref{S:ablation} for details).
In this section, we propose a novel approach for verifying the $\epsilon$-Lyapunov conditions {for arbitrary Lipschitz continuous dynamics} using mixed-integer linear programming, through obtaining piecewise-linear bounds on the dynamics. We assume  $V_\theta$ and $\pi_\beta$ are $K$- and $N$-layer neural networks, respectively, with ReLU activations. 

We begin with the problem of verifying 
condition (c) of $\epsilon$-stability, which we
represent as a feasibility problem: to find if there is any point $\tilde{x} \in \mathcal{R}(\epsilon,\gamma)$
such that $V(\tilde{x}) \leq 0$. 
We can formulate it as the following MILP:
\begin{subequations}
\label{E:verify0}
\begin{align}
&z_K \le 0\\
&z_{l+1} = g_{\theta_l}(z_l), \quad 0\leq l \leq K-1\label{C:nn}\\
&\epsilon \leq \|x\|_\infty \le \gamma, \quad z_0=x,\label{C:init}
\end{align}
\end{subequations}
where $l$ refers to a layer in the neural network $V_\theta$, $z_K = V_\theta(x)$, and the associated functions $g_{\theta_l}(z_l)$ are either $W_l z_l + b_l$ for a linear layer (with $\theta_l = (W_l,b_l)$) or $\max\{z_l,0\}$ for a ReLU activation.
Any feasible solution $x^*$ is then a counterexample, and if the problem is infeasible, the condition is satisfied.
ReLU activations $g(z)$ can be linearized by introducing an integer variable $a \in \{0,1\}$, and replacing the $z'=g(z)$ terms with constraints $z' \leq U a$ and $z' \leq -L(1-a)$, where $L$ and $U$ are specified so that $L \le z \le U$ (we deal with identifying $L$ and $U$ below).

{Next, we cast verification of condition $(b)$ of $\epsilon$-stability, which involves the nonlinear control dynamics $f$, as the following feasibility problem:}
\begin{subequations}
\label{E:verify1}
\begin{align}
&\bar{z}_K - z_K \ge -\zeta\\
& 
y_{i+1} = h_{\beta_i}(y_i), \quad 0 \leq i \leq N-1\label{C:policy}\\
&\bar{z}_{l+1} = g_{\theta_l}(\bar{z}_l), \quad 0 \leq l \leq K-1\label{C:left1}\\
&\bar{z}_0=f(x,y_N)\label{C:left2}\\
&y_0=x, \quad \mathrm{constraints}~\eqref{C:nn}-\eqref{C:init},
\end{align}
\end{subequations}
where $h_{\beta_i}()$ are functions computed at layers $i$ of $\pi_\beta$, $z_K=V_\theta(x)$, and $\bar{z}_K=V_\theta(f(x,\pi_\beta(x)))$.

At this point, all of the constraints can be linearized as before with the exception of Constraint~\eqref{C:left2}, which involves the nonlinear dynamics $f$.
To deal with this, we relax the verification problem by replacing $f$ with linear lower and upper bounds.
To obtain tight bounds, we divide $\mathcal{R}(\gamma)$ into a collection of subdomains $\{\mathcal{R}_k\}$.
For each $\mathcal{R}_k$, we obtain a linear lower bound $f_{lb}(x)$ and upper bound $f_{ub}(x)$ on $f$, 
and relax the problematic Constraint~\eqref{C:left2} into
\(
f_{lb}(x) \le \bar{z}_0 \le f_{ub}(x),
\)
which is now a pair of linear constraints.
We can then solve Problem~\eqref{E:verify1} for each $\mathcal{R}_k$.

\noindent\textbf{Computing Linear Bounds on System Dynamics }Recall that $f: \mathbb{R}^n\mapsto\mathbb{R}^n$ is the Lipschitz-continuous system dynamic $x_{t+1} = f(x_t, u_t)$. {For simplicity we omit $u_t = \pi_\beta(x)$ below.}
Let $\lambda$ be the $\ell_\infty$ Lipschitz constant of $f$.
Suppose that we are given a region of the domain $\mathcal{R}_k$ represented as a hyperrectangle, i.e., $\mathcal{R}_k = \times_i [x_{i,l},x_{i,u}]$, where 
$[x_{i,l},x_{i,u}]$ are the lower and upper bounds of $x$ along coordinate $i$.
Our goal is to compute a linear upper and lower bound on $f(x)$ %
over this region. 
We bound $f_j(x): \mathbb{R}^n\mapsto\mathbb{R}$ along each $j$-th dimension separately. %
By $\lambda$-Lipschitz-continuity, we can obtain $f_j(x)  \leq f_j(x_{1,l}, x_{2,l},\ldots,x_{n,l}) + \lambda \sum_{i}(x_i - x_{i,l})$ and  $  f_j(x) \geq f_j(x_{1,l}, x_{2,l},\ldots,x_{n,l}) - \lambda\sum_{i}(x_i - x_{i,l})$.  The full derivation is provided in the Supplement Section~\ref{S:lipschitz}.

Alternatively, if $f_j$ is differentiable,  convex over  $x_i$, and monotone over other dimensions, we can restrict it to calculating  one-dimensional bounds by finding coordinates $x_{-i, u}$ and $x_{-i, l}$ such that $f_j(x_i, x_{-i, l}) \leq f_j(x_i, x_{-i}) \le f_j(x_i, x_{-i, u})$ for any given $x$. Then 
$f_j(x) \le f_{j,i,ub}(x) \equiv f_j(x_{i,l},x_{-i, u}) + \frac{f_j(x_{i,u},x_{-i, u}) - f_j(x_{i,l},x_{-i, u})}{x_{i,u} - x_{i,l}}{(x_i - x_{i,l})}$ and $f_j(x) \ge f_{i,lb}(x) \equiv f_j(x_{i}^*,x_{-i, l}) + f_j'(x_i^*,x_{-i, l})(x_i-x_i^*)$, where we let $x_i^* = \min_{x_s \in [x_{i,l}, x_{i,u}]} \int_{x_{i,l}}^{x_{i,u}} [f_j(x_i, x_{-i, l}) - f_j(x_s, x_{-i, l}) + f_j'(x_s, x_{-i, l})(x_i - x_s) ] d x_s$ to minimize the error area between $f_j(x)$ and $f_{i,lb}(x)$.  Note that the solution for $x_i^*$ can be approximated when conducting experiments, and this approximation has no impact on the correctness of the bound.
A similar result obtains for  where $f_j(x)$ is concave over $x_i$.
Furthermore, we can use these single-dimensional bounds to obtain tighter lower and upper bounds on $f_j(x)$ as follows:
note that for any $c^l,c^u$ with $\sum_i c_i^l =1$ and $\sum_i c_i^u =1$, 
$\sum_i c_i^l f_{j,i,lb}(x) \le f_j(x) \le \sum_i c_i^u f_{j,i,ub}(x)$,
which means we can optimize $c^l$ and $c^u$ to minimize the bounding error.
In practice, we can typically partition $\mathcal{R}(\gamma)$ so that the stronger monotonicity and convexity or concavity assumptions hold for each $\mathcal{R}_k$.

{Note that} our linear bounds $f_{lb}(x)$ and $f_{ub}(x)$ introduce errors compared to the original nonlinear dynamics $f$.
{However,} we can obtain tighter bounds by splitting $\mathcal{R}_k$ further into subregions, and computing tighter bounds in each of the resulting subregions, but at the cost of increased computation.
To balance these considerations, we start with  a relatively small collection of subdomains $\{\mathcal{R}_k\}$, and only split a region $\mathcal{R}_k$ if
we obtain a counterexample in $\mathcal{R}_k$ that is not an actual counterexample for the true dynamics $f$.

\noindent\textbf{Computing Bounds on ReLU Linearization Constants }
In linearizing the ReLU activations, we supposed an existence of lower and upper bounds $L$ and $U$.
However, we cannot simply set them to some large negative and positive number, respectively,
because $V_\theta(x)$ has no a priori fixed bounds (in particular, note that for any $\epsilon$-Lyapunov function $V$ and constant $a > 1$, $aV$ is also a  $\epsilon$-Lyapunov function).
Thus, arbitrarily setting $L$ and $U$ makes verification unsound. 
To address this issue, 
we use interval bound propogation (IBP)~\citep{gowal2018effectiveness} to obtain  $M = \max_{1\leq i \leq n} \{|U_i|, |L_i|\}$, where $U_i$ is the upper bound, and $L_i$ is the lower bound returned by IBP for the $i$-th layer, with inputs for the first layer the upper and lower bounds of $f(\mathcal{R}(\gamma))$.
Setting each $L = -M$ and $U = M$ then yields sound verification.

\noindent\textbf{Computing Sublevel Sets }
The approaches above verify the conditions (a)-(c) of $\epsilon$-stability on $\mathcal{R}(\gamma)$.
The final piece is to find the  $\mathcal{R}(\gamma)$-invariant sublevel set $\mathcal{D}(\gamma,\rho)$, that is, to find $\rho$.
Let $B(\gamma) \ge \max\left(\max_{x \in \mathcal{R}(\gamma)} \|f(x,\pi_\beta(x))\|_\infty, \gamma \right)$.
We find $\rho$ by solving 
\begin{align}
\label{E:roasetlevel}
\min_{x: \gamma \le \|x\|_\infty \le B(\gamma)} V(x).    
\end{align}
We can transform both Problem~\eqref{E:roasetlevel} and computation of $B(\gamma)$ into MILP as for other problems above.

\begin{theorem}
Suppose that $V$ and $\pi$ are $\epsilon$-stable on $\mathcal{R}(\gamma)$, $\|\pi(0)-u_0\|_\infty \le \epsilon$, and $\gamma \ge L_f \max\{1,L_\pi + 1\}\epsilon$.
Let $V^*$ be the optimal value of the objective in Problem~\eqref{E:roasetlevel}, and $\rho = V^* - \mu$ for any $\mu > 0$.
Then the set $\mathcal{D}(\gamma, \rho) =\{x:x \in \mathcal{R}(\gamma), V(x) 
\leq \rho\}$ is an $\mathcal{R}(\gamma)$-invariant sublevel set.
\label{roa_theorem}
\end{theorem}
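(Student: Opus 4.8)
The plan is to verify the invariance property directly from its definition: show that every $x \in \mathcal{D}(\gamma,\rho)$ satisfies $f(x,\pi(x)) \in \mathcal{R}(\gamma)$, i.e. $\|f(x,\pi(x))\|_\infty \le \gamma$. I would split the argument according to whether $x$ lies in the ``bad'' ball $\mathcal{B}(0,\epsilon)$ near the origin (where condition (b) of $\epsilon$-stability gives no information) or in its complement within $\mathcal{R}(\gamma)$ (where the Lyapunov decrease is available). Fix $x \in \mathcal{D}(\gamma,\rho)$, so that $x \in \mathcal{R}(\gamma)$ and $V(x) \le \rho = V^* - \mu$, and write $x^+ = f(x,\pi(x))$.

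For the case $x \in \mathcal{B}(0,\epsilon)$, I would reuse the Lipschitz estimate already carried out in the proof of Theorem~\ref{L:roa}(ii): since $f(0,u_0) = 0$ and $f$ is $L_f$-Lipschitz (in $\ell_\infty$), $\|x^+\|_\infty \le L_f\max\{\|x\|_\infty,\ \|\pi(x)-u_0\|_\infty\}$, and since $\pi$ is $L_\pi$-Lipschitz with $\|\pi(0)-u_0\|_\infty \le \epsilon$, the triangle inequality gives $\|\pi(x)-u_0\|_\infty \le L_\pi\|x\|_\infty + \epsilon \le (L_\pi+1)\epsilon$. Combining with $\|x\|_\infty < \epsilon$ yields $\|x^+\|_\infty \le L_f\max\{1,L_\pi+1\}\epsilon \le \gamma$ by the hypothesis on $\gamma$, so $x^+ \in \mathcal{R}(\gamma)$ in this case.

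For the case $x \in \mathcal{R}(\gamma)\setminus\mathcal{B}(0,\epsilon)$, I would argue by contradiction. Suppose $\|x^+\|_\infty > \gamma$. By the defining property $B(\gamma) \ge \max_{x\in\mathcal{R}(\gamma)}\|f(x,\pi_\beta(x))\|_\infty$, we also have $\|x^+\|_\infty \le B(\gamma)$, so $x^+$ is feasible for Problem~\eqref{E:roasetlevel} and hence $V(x^+) \ge V^*$. On the other hand, condition (b) of $\epsilon$-stability applies at $x$ and gives $V(x^+) < V(x) - \zeta < V(x) \le \rho = V^* - \mu < V^*$ (using $\zeta > 0$ and $\mu > 0$), contradicting $V(x^+) \ge V^*$. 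Therefore $\|x^+\|_\infty \le \gamma$, i.e. $x^+ \in \mathcal{R}(\gamma)$. Since both cases give $f(x,\pi(x)) \in \mathcal{R}(\gamma)$, the set $\mathcal{D}(\gamma,\rho)$ is $\mathcal{R}(\gamma)$-invariant.

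I expect the only subtle point to be the near-origin case: there $\epsilon$-stability says nothing about the value of $V$ at $x^+$, so the sublevel-set/contradiction argument breaks down and one must instead fall back on the purely geometric Lipschitz bound — which is precisely why the assumptions $\|\pi(0)-u_0\|_\infty \le \epsilon$ and $\gamma \ge L_f\max\{1,L_\pi+1\}\epsilon$ appear in the statement. Everything else is a routine consequence of the definition of $V^*$ as the minimum of $V$ over the shell $\gamma \le \|x\|_\infty \le B(\gamma)$ together with the fact that $B(\gamma)$ upper-bounds the one-step image of $\mathcal{R}(\gamma)$; one should also note that if $\mathcal{D}(\gamma,\rho)$ happens to be empty the claim holds vacuously, so no lower bound on $V^*$ is needed for correctness.
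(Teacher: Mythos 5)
Your proposal is correct and follows essentially the same route as the paper's own proof: a case split between $\|x\|_\infty < \epsilon$ (handled by the Lipschitz bound $\|f(x,\pi(x))\|_\infty \le L_f\max\{1,L_\pi+1\}\epsilon \le \gamma$, exactly as in Theorem~\ref{L:roa}(ii)) and the remaining points (handled by the contradiction via the definitions of $B(\gamma)$ and $V^*$ together with the strict decrease from condition (b)). You merely spell out in full the steps the paper compresses, so there is nothing to add.
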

\vspace{-0.6cm}
\begin{proof}
If $\|x\|_\infty > \epsilon$, $V(f(x,\pi(x)))<V(x)\le \rho < V^*$ by $\epsilon$-stability of $V$ and $\pi$.
Suppose that $\bar{x}=f(x,\pi(x)) \notin \mathcal{R}(\gamma)$.
Since $V(\bar{x})<V^*$, $V^*$ must not be an optimal solution to~\eqref{E:roasetlevel}, a contradiction.
If $\|x\|_\infty \le \epsilon$, the argument is similar to Theorem~\ref{L:roa} (ii).
\end{proof}

\section{Learning $\epsilon$-Lyapunov Function and Policy}
\label{S:learning}

We cast learning the $\epsilon$-Lyapunov function and policy as the following problem:
\begin{align}
\label{E:trainLyapunov}
\min_{\theta, \beta} \sum_{i \in S} \mathcal{L}({x_i};V_\theta,\pi_\beta),
\end{align}
where $\mathcal{L}(\cdot)$ is a loss function that promotes Lyapunov learning and the set $S \subseteq \mathcal{R}(\gamma)$ is a finite subset of points in the valid region.
We assume that the loss function is differentiable, and consequently training follows a sequence of gradient update steps 
\((\theta',\beta') = (\theta,\beta) - \mu \sum_{i \in S} \nabla_{\theta,\beta} \mathcal{L}(x_i;V_\theta,\pi_\beta).\)

Clearly, the choice of the set $S$ is pivotal to successfully learning $V_\theta$ and $\pi_\beta$ with provable stability properties.
Prior approaches for learning Lyapunov functions represented by neural networks use one of two ways to generate $S$.
The first is to generate a fixed set $S$ comprised of 
uniformly random samples from the domain~\citep{chang2021stabilizing,richards2018lyapunov}.
However, this fails to learn verifiable Lyapunov functions.
In the second, $S$ is not fixed, but changes in each learning iteration, starting with randomly generated samples in the domain, 
and then using the verification tool, such as dReal, in each step of the learning process to update $S$~\citep{chang2019neural,zhou2022neural}.
However, verification is often slow, and becomes a significant bottleneck in the learning process.

Our key idea is to combine the benefits of these two ideas with a fast gradient-based heuristic approach for generating counterexamples.
In particular, our proposed approach for training Lyapunov control functions and associated control policies (Algorithm~\ref{alg:certify_main}; \textsc{DITL}) involves five parts: 
\begin{enumerate}[topsep=0pt,itemsep=-1ex,partopsep=1ex,parsep=1ex]
\item heuristic counterexample generation (lines 10-12),
\item choosing a collection $S$ of inputs to update $\theta$ and $\beta$ in each training (gradient update) iteration (lines 9-13),
\item the design of the loss function $\mathcal{L}$,
\item initialization of policy $\pi_\beta$ (line 3), and
\item warm starting the training (line 4).
\end{enumerate}

We describe each of these next.
\begin{algorithm}[htbp]
   \caption{\textsc{DITL} Lyapunov learning algorithm.}
   \label{alg:certify_main}
\begin{algorithmic}[1]
\STATE {\bfseries Input:} Dynamical system model $f(x,u)$ and target valid region $\mathcal{R}(\gamma)$
\STATE {\bfseries Output:} Lyapunov function $V_\theta$ and control policy $\pi_\beta$ 
\STATE $\pi_\beta=$ Initialize()
\STATE $V_\theta =$ PreTrainLyapunov($N_0,\pi_\beta$)
\STATE $B=$ InitializeBuffer()
\STATE $W \xleftarrow{} \emptyset$
\WHILE{True}
\FOR{$N$ iterations}
\STATE $\hat{S}=$Sample($r, B$) //sample $r$ points from $B$
\STATE $T=$Sample($q, \mathcal{R}(\gamma)$) //sample $q$ points from $\mathcal{R}(\gamma)$
\STATE $T'=$\textsc{PGD}($T$)
\STATE $T'' = $FilterCounterExamples$(T')$
\STATE $S = \hat{S} \cup T'' \cup W$
\STATE $B = B \cup T''$
\STATE $(\theta,\beta) \leftarrow (\theta,\beta) - \mu \sum_{i \in S} \nabla_{\theta,\beta} \mathcal{L}(x_i;V_\theta,\pi_\beta)$
\ENDFOR
\STATE (success$,\hat{W}$) $=$Verify($V_\theta,\pi_\beta$) \label{algline:verify}
\IF{success}
\STATE \textbf{return} $V_\theta, \pi_\beta$
\ELSE 
\STATE $W \leftarrow W \cup \hat{W}$
\ENDIF
\ENDWHILE
\STATE \textbf{return} FAILED // Timeout %
\end{algorithmic}
\end{algorithm} 

\noindent\textbf{Heuristic Counterexample Generation }
As discussed in Section~\ref{S:verify},  verification in general involves two optimization problems: 
$\min_{x \in \mathcal{R}(\gamma)}  V_\theta(x) - V_\theta(f(x,\pi_\beta(x)))$ and $\min_{x \in \mathcal{R}(\gamma)} V_\theta(x)$.
We propose a \emph{projected gradient descent} (\textsc{PGD}) approach for approximating solutions to either of these problems.
\textsc{PGD} proceeds as follows, using the second minimization problem as an illustration; the process is identical in the first case.
Beginning with a starting point $x_0 \in \mathcal{R}(\gamma)$, we iteratively update $x_{k+1}$:
\[
x_{k+1} = \Pi\{x_{k}+\alpha_k \mathrm{sgn}(\nabla V_\theta(x_k))\},
\]
where $\Pi\{\}$ projects the gradient update step onto the domain $\mathcal{R}(\gamma)$ by clipping each dimension of $x_k$, and $\alpha_k$ is the \textsc{PGD} learning rate.
Note that as $f$ is typically differentiable, we can take gradients directly through it when we apply \textsc{PGD} for the first verification problem above.

We run this procedure for $K$ iterations, and return the result (which may or may not be an actual counterexample).
Let $T'=$\textsc{PGD($T$)} denote running \text{PGD} for a set of $T$ starting points for both of the optimization problems above, resulting in a corresponding set $T'$ of counterexamples.

\noindent\textbf{Selecting Inputs for Gradient Updates }
A natural idea for selecting samples $S$ is to simply add counterexamples identified in each iteration of the Lyapunov learning process.
However, as $S$ then grows without bound, this progressively slows down the learning process.
In addition, it is important for $S$ to contain a relatively broad sample of the valid region to ensure that counterexamples we generate along the way do not cause undesired distortion of $V_\theta$ and $\pi_\beta$ in subregions not well represented in $S$.
Finally, we need to retain the memory of previously generated counterexamples, as these are often particularly challenging parts of the input domain for learning.
We therefore construct $S$ as follows.

We create an input buffer $B$, and initialize it with a random sample of inputs from $\mathcal{R}(\gamma)$.
Let $W$ denote a set of counterexamples  that we wish to remember through the entire training process, initialized to be empty.
In our implementation, $W$ consists of all counterexamples generated by the sound verification tools described in Section~\ref{S:verify}.

At the beginning of each training update iteration, we randomly sample a fixed-size subset $\hat{S}$ of the buffer $B$.
Next, we take a collection $T$ of random samples from $\mathcal{R}$ to initialize \textsc{PGD}, and generate $T'=$\textsc{ PGD($T$)}. 
We then filter out all the counterexamples from $T'$, retaining only those with either $V_\theta(x) \leq 0$ or $V_\theta(x) - V_\theta(f(x, \pi_\beta(x))) \leq \epsilon$, where $\epsilon$ is a non-negative hyperparameter; this yields a set $T''$.
We then use $S = \hat{S} \cup T'' \cup W$ for the gradient update in the current iteration.
Finally, we update the buffer $B$ by adding to it $T''$.
This process is repeated for $N$ iterations.

After $N$ iterations, we check to see if we have successfully learned a stable policy and a Lyapunov control function by using the tools from Section~\ref{S:verify} to verify $V_\theta$ and $\pi_\beta$.
If verification succeeds, training is complete.
Otherwise, we use the counterexamples $\hat{W}$ generated by the MILP solver to update $W = W \cup \hat{W}$, and repeat the learning process above.

\noindent\textbf{Loss Function Design }
Next, we design of the loss function $\mathcal{L}(x;V_\theta,\pi_\beta)$ for a given input $x$.
A key ingrediant in this loss function is a term that incentivizes the learning process to satisfy condition (b) of $\epsilon$-stability:
\(
\mathcal{L}_1(x;V_\theta,\pi_\beta) = \text{ReLU}(V_\theta(f(x,\pi_\beta(x))) - V_\theta(x) + \eta),
\)
where the parameter $\eta \ge 0$ determines how aggressively we try to satisfy this condition during learning.

There are several options for learning $V_\theta$ satisfying conditions (a) and (c) of $\epsilon$-stability.
The simplest is to set all bias terms in the neural network to zero, which
immediately satisfies $V_\theta(0)=0$.
An effective way to deal with condition (c) is to maximize the lower bound on $V_\theta(x)$.
To this end, we propose to make use of the following loss term:
\(
\mathcal{L}_2(x;V_\theta,\pi_\beta) = \text{ReLU}(-V_\theta^{LB}),
\)
where $V_\theta^{LB}$ is the lower bound on the Lyapunov function over the target domain $\mathcal{R}(\gamma)$.
We use use $\alpha, \beta$-CROWN~\citep{xu2020automatic} to obtain this lower bound.

The downside of setting bias terms to zero is that we lose many learnable parameters, reducing flexibility of the neural network.
If we consider a general neural network, on the other hand, it is no longer the case that $V_{\theta}(x)=0$ by construction.
However, it is straigthforward to ensure this by defining the final Lyapunov function as $\tilde{V}_\theta(x) = V_\theta(x)-V_\theta(0)$.
Now, satisfying condition (c) amounts to satisfying the condition that $V_\theta(x)\ge V_\theta(0)$, which we accomplish via the following pair of loss terms:
\(
\mathcal{L}_3(x;V_\theta,\pi_\beta) = \text{ReLU}(V_\theta(0)-V_\theta(x) + \mu \min(\|x\|_2,\nu)),
\)
where $\mu$ and $\nu$ are hyperparameters of the term $\mu \min(\|x\|_2,\nu)$ which effectively penalizes $V_\theta(x)$ for being too large, and
\(
\mathcal{L}_4(x;V_\theta,\pi_\beta) = \|V_\theta(0)\|_2^2.
\)

The general loss function is then a weighted sum of these loss terms,
\begin{align}
\label{E:loss}
\mathcal{L}(x;V_\theta,\pi_\beta) = \mathcal{L}_1(x;V_\theta,\pi_\beta) + c_2 \mathcal{L}_2(x;V_\theta,\pi_\beta)\nonumber
+  c_3 \mathcal{L}_3(x;V_\theta,\pi_\beta)\nonumber
+ c_4 \mathcal{L}_4(x;V_\theta,\pi_\beta).
\end{align}
When we set bias terms to zero, we would set $c_3=c_4=0$; otherwise, we set $c_2=0$.

\noindent\textbf{Initialization } 
We consider two approaches for initializing the policy $\pi_\beta$.
The first is to linearize the dynamics $f$ around the origin, and use a policy computing based on a linear quadratic regulator (LQR) to obtain a simple linear policy $\pi_\beta$.
The second approach is to use deep reinforcement, such as PPO~\citep{liu2019neural}, where rewards correspond to stability (e.g., reward is the negative value of the $l_2$ distance of state to origin).
To initialize $V_\theta$, we fix $\pi_\beta$ after its  initialization, and follow our learning procedure above using solely heuristic counterexample generation to pre-train $V_\theta$.

The next result now follows by construction.
\begin{theorem}
If Algorithm~\ref{alg:certify_main} returns $V_\theta, \pi_\beta$, they are guaranteed to satisfy $\epsilon$-stability conditions.
\end{theorem}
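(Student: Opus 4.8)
The plan is to reduce the statement to the soundness of the verification subroutine \textsc{Verify} invoked on line~\ref{algline:verify}. First I would trace the control flow of Algorithm~\ref{alg:certify_main}: the only line at which a pair $(V_\theta,\pi_\beta)$ is returned is the \texttt{return} inside the branch guarded by \texttt{success}; the only other \texttt{return} is the terminal \texttt{return FAILED}, which does not produce a pair. Hence, whenever the algorithm outputs $(V_\theta,\pi_\beta)$, the most recent call $(\texttt{success},\hat W)=\textsc{Verify}(V_\theta,\pi_\beta)$ must have returned $\texttt{success}=\text{True}$, and it suffices to show that this implies $V_\theta,\pi_\beta$ satisfy conditions (a)--(c) of $\epsilon$-stability on $\mathcal{R}(\gamma)$.

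Next I would unpack what $\texttt{success}=\text{True}$ certifies, following Section~\ref{S:verify}. Condition (a), $V_\theta(0)=0$, holds by construction of the function class (all biases zeroed, or the deployed function is $\tilde V_\theta(x)=V_\theta(x)-V_\theta(0)$), independently of verification. For condition (c), \textsc{Verify} builds the MILP~\eqref{E:verify0} over $\mathcal{R}(\epsilon,\gamma)$ with ReLU big-$M$ constants $L=-M$, $U=M$ from IBP, and \texttt{success} requires infeasibility. Since the IBP constants are valid pre-activation bounds on every ReLU unit over the relevant input box, the MILP feasible set contains every genuine pair $(x,V_\theta(x))$ with $x\in\mathcal{R}(\epsilon,\gamma)$; infeasibility therefore excludes any such $x$ with $V_\theta(x)\le0$, i.e. $V_\theta(x)>0$ on $\mathcal{R}(\gamma)\setminus\mathcal{B}(0,\epsilon)$. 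For condition (b), \textsc{Verify} covers $\mathcal{R}(\gamma)$ by hyperrectangles $\{\mathcal{R}_k\}$ with $\bigcup_k\mathcal{R}_k\supseteq\mathcal{R}(\gamma)$ and, on each, solves the relaxed MILP~\eqref{E:verify1} in which $\bar z_0=f(x,y_N)$ is replaced by $f_{lb}(x)\le\bar z_0\le f_{ub}(x)$; \texttt{success} requires all of these to be infeasible (the region-splitting loop re-subdivides and re-solves whenever a returned point is a spurious counterexample, so termination with \texttt{success} genuinely certifies infeasibility over the final cover).

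The crux — and the step I expect to be the main obstacle — is the soundness direction for (b): the relaxation only enlarges the feasible set. By the linear-bound derivations (the $\lambda$-Lipschitz bound, and the convex/monotone bound with the convex-combination tightening), $f_{lb}(x)\le f(x,\pi_\beta(x))\le f_{ub}(x)$ pointwise on each $\mathcal{R}_k$, so for any $x$ the true trajectory point $(x,\,f(x,\pi_\beta(x)),\,\pi_\beta(x),\ldots)$ is feasible for the relaxed MILP on the $\mathcal{R}_k$ containing $x$; hence infeasibility on every $\mathcal{R}_k$ implies there is no $x\in\mathcal{R}(\epsilon,\gamma)$ with $V_\theta(f(x,\pi_\beta(x)))-V_\theta(x)\ge-\zeta$, which is condition (b) with the constant $\zeta$ used in~\eqref{E:verify1}. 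Making this rigorous rests entirely on the validity of the auxiliary bounds feeding the MILPs: that the IBP constants $M$ over-approximate the pre-activations of $V_\theta$ on the pertinent boxes (including the box $f(\mathcal{R}(\gamma))$ for the outer copy of $V_\theta$ in~\eqref{E:verify1}), and that $f_{lb},f_{ub}$ are sound bounds on $f$ over each $\mathcal{R}_k$; given these (established in the referenced supplement and the standard exact ReLU-MILP encoding), everything else is bookkeeping, and combining (a)--(c) gives the claim. I would also note in passing that \textsc{Verify} computes $\rho$ via~\eqref{E:roasetlevel}, so by Theorem~\ref{roa_theorem} the returned $\rho$ additionally yields an $\mathcal{R}(\gamma)$-invariant sublevel set, though this is stronger than what the statement requires.
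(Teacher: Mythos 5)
Your proposal is correct and takes essentially the same route as the paper, which offers no explicit proof beyond the remark that the result ``follows by construction'': the pair is returned only after the sound verifier succeeds, and the verifier's soundness rests exactly on the points you identify (the exact ReLU--MILP encoding with valid IBP big-$M$ constants, and the fact that replacing $f$ by the sound bounds $f_{lb}\le f\le f_{ub}$ on each $\mathcal{R}_k$ only enlarges the feasible set, so infeasibility everywhere rules out any true violation of conditions (a)--(c)). Your write-up simply makes explicit the bookkeeping the paper leaves implicit, and is a faithful elaboration rather than a different argument.
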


\section{Experiments}
\label{S:exp}
\subsection{Experiment Setup} 
\noindent\textbf{Benchmark Domains }
Our evaluation of the proposed \textsc{DITL} approach uses four benchmark control domains: \emph{inverted pendulum}, \emph{path tracking}, \emph{cartpole}, and \emph{drone planar vertical takeoff and landing (PVTOL)}.
Details about these domains
are provided in the Supplement.

\noindent\textbf{Baselines }
We compare the proposed approach to four baselines: \emph{linear quadratic regulator (LQR)}, \emph{sum-of-squares (SOS)}, \emph{neural Lyapunov control (NLC)} \citep{chang2019neural}, and \emph{neural Lyapunov control for unknown systems (UNL)} \citep{zhou2022neural}.
The first two, LQR and SOS, are the more traditional approaches for computing Lyapunov functions when the dynamics are either linear (LQR) or polynomial (SOS)~\citep{tedrake2009underactuated}.
LQR solutions (when the system can be stabilized) are obtained through matrix multiplication, while SOS entails solving a semidefinite program (we solve it using YALMIP with MOSEK solver in MATLAB 2022b).
The next two baselines, NLC and UNL, are recent approaches for learning Lyapunov functions in continuous-time systems (no approach for learning provably stable control using neural network representations exists for discrete time systems).
Both NLC and UNL yield provably stable control for general non-linear dynamical systems, and are thus the most competitive baselines to date.
In addition to these baselines, we consider an ablation in which PGD-based counterexample generation for our approach is entirely replaced by the sound MILP-based method during learning (we refer to it as \textsc{DITL-MILP}). 

\noindent\textbf{Efficacy Metrics }
We compare approaches in terms of three efficacy metrics.
The first is (serial) runtime (that is, if no parallelism is used), which we measure as wall clock time when only a single task is running on a machine.
For inverted pendulum and path tracking, all comparisons were performed on a machine with AMD Ryzen 9 5900X 12-Core Processor and Linux Ubuntu 20.04.5 LTS OS.
All cartpole and PVTOL experiments were run on a machine with a Xeon Gold 6150 CPU (64-bit 18-core x86), Rocky Linux 8.6.
UNL and RL training for Path Tracking are the only two cases that make use of GPUs, and was run on NVIDIA GeForce RTX 3090.
The second metric was the size of the valid region, measured using $\ell_2$ norm for NLC and UNL, and $\ell_\infty$ norm (which dominates $\ell_2$) for LQR, SOS, and our approach.
The third metric is the \emph{region of attraction (ROA)}.
Whenever verification fails, we set ROA to 0.
Finally, we compare all methods whose results are stochastic (NLC, UNL, and ours) in terms of \emph{success rate}.

\noindent\textbf{Verification Details }
For LQR, SOS, NLC, and UNL, we used dReal as the verification tool, as done by~\citet{chang2019neural} and~\citet{zhou2022neural}.
For DITL verification we used CPLEX version 22.1.0.

\subsection{Results}
\begin{figure}[htbp]
\centering
\vskip -0.1in
\includegraphics[width=0.75\columnwidth]{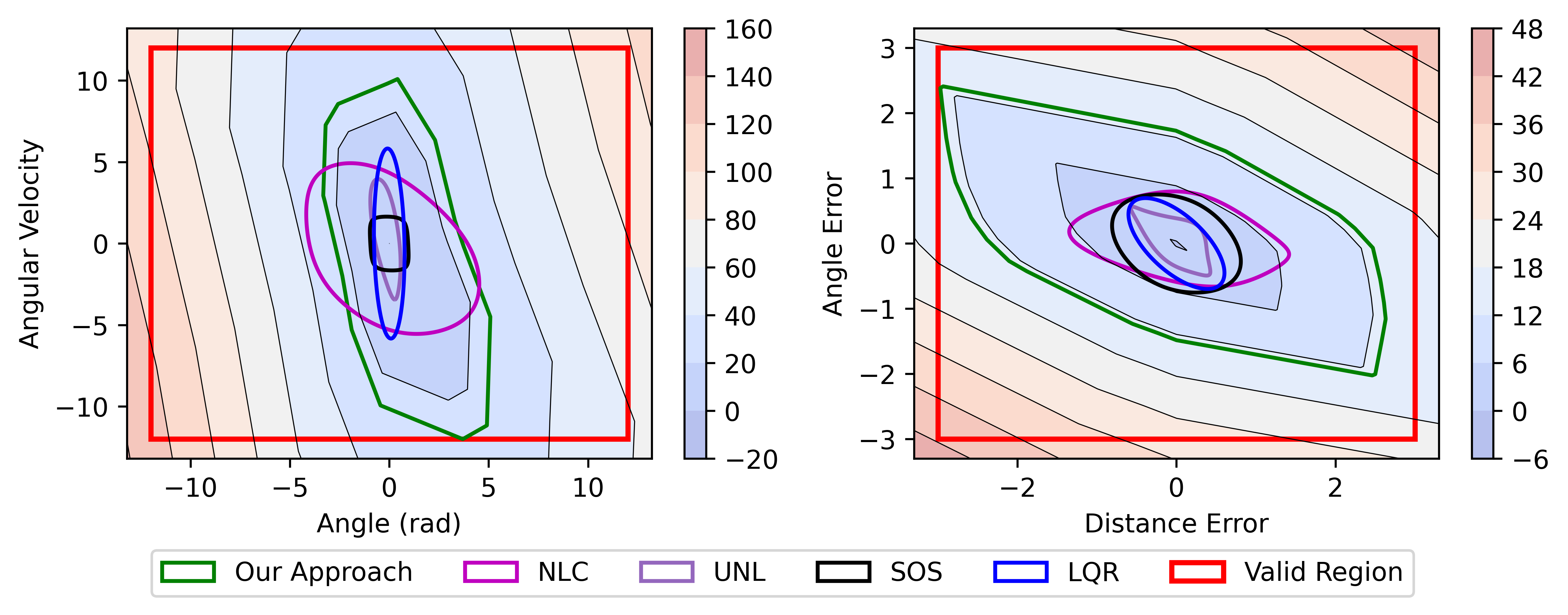}
\caption{ROA plot of inverted pendulum (left) and path tracking (right). We select the best result for each method.}
\label{F:roa}
\vskip -0.1in
\end{figure}
\noindent\textbf{Inverted Pendulum }
For the inverted pendulum domain, we initialize the control policy using the LQR solution (see the Supplement for details).
We train 
$V_\theta$ with non-zero bias terms.
We set $\epsilon=0.1$ ($<0.007$\% of the valid region), and 
approximate ROA using a grid of 2000 cells along each coordinate using the level set certified with MILP~\eqref{E:roasetlevel}.
All runtime is capped at 600 seconds, except the UNL baseline, which we cap at 16 minutes, as it tends to be considerably slower than other methods.
Our results are presented in Table~\ref{T:pendulum}.
While LQR and SOS are fastest,
our approach (\textsc{DITL}) is the next fastest, taking on average $\sim$8 seconds, with NLC and UNL considerably slower.
However, \textsc{DITL} yields 
an ROA \emph{a factor $>$4 larger} than the nearest baseline (LQR), and 100\% success rate.
Finally, the \textsc{DITL-MILP} ablation is two orders of magnitude slower (runtime $>300$ seconds) \emph{and} less effective (average ROA=$42$, 80\% success rate) than \textsc{DITL}.
We visualize maximum ROA produced by all methods in Figure~\ref{F:roa} (left).

\begin{table}[!htbp]
\caption{Inverted Pendulum}
\label{T:pendulum}
\centering
\begin{tabular}{llllll}
\toprule
                 & Valid Region & Runtime (s) & ROA    & Max ROA      & Success Rate \\
\midrule
NLC (free)  &        $||x||_2 \leq 6.0$      & $28 \pm 29$ & $11 \pm 4.6$  & $22$ & 100\%     \\
NLC (max torque 6.0) &     $||x||_2 \leq 6.0$     & $519 \pm 184$       &    $13 \pm 27$ &  $66$    &    20\%   \\
UNL (max torque 6.0) &   $||x||_2 \leq 4.0$   & $821 \pm 227$ & $1 \pm 2$& $7$&  30\%  \\
LQR              &    $||x||_{\infty} \leq 5.8$           & $\mathbf{<1}$          &        $14$    & $14$  & success    \\
SOS              &        $||x||_{\infty} \leq 1.7$       & $\mathbf{<1}$          &        $6$    & $6$  & success   \\
\midrule
DITL  &      $||x||_{\infty} \leq 12$         & $8.1 \pm 4.7$& $\mathbf{61 \pm 31}$& $\mathbf{123}$ &100\% \\
\bottomrule
  \end{tabular}
\end{table}

\noindent\textbf{Path Tracking }
In path tracking, we initialize our approach using both the RL and LQR solutions, drawing a direct comparison between the two (see the Supplement for  details).
We set $\epsilon=0.1$.
The running time of RL is $\sim 155$ seconds. 
The results are provided in Table~\ref{pathtracking}.
We can observe that both RL- and LQR-initialized variants of our approach outperform all prior art, with RL exhibiting \emph{a factor of 5 advantage} over the next best (SOS, in this case) in terms of ROA, and nearly a factor of 6 advantage in terms of maximum achieved ROA (NLC is the next best in this case).
Moreover, our approach again has a 100\% success rate.
Our runtime is an order of magnitude lower than NLC or UNL.
Overall, the RL-initialized variant slightly outperforms LQR initialization.
The \textsc{DITL-MILP} ablation again performs far worse than \textsc{DITL}: running time is several orders of magnitude slower (at $>550$ seconds), with low efficacy (ROA is 1.1, success rate 10\%).
We visualize comparison of maximum ROA produced by all methods in Figure~\ref{F:roa} (right).
\begin{table}[tbp]
\caption{Path Tracking}
\label{pathtracking}
  \centering
\begin{tabular}{llllll}
\toprule
                 & Valid Region & Runtime (s) & ROA           & Max ROA & Success Rate \\
\midrule
NLC  &   $||x||_2 \leq 1.0$     &  $109 \pm 81$  & $0.5 \pm 0.2$ & $0.76$ & 100\%  \\
NLC  &   $||x||_2 \leq 1.5$     & $151 \pm 238$   & $1.4 \pm 0.9$ & $2.8$ & 80\%  \\
UNL &    $||x||_2 \leq 0.8$   & $925 \pm 110$ & $0.1 \pm 0.2$& $0.56$&  10\%  \\
LQR             &       $||x||_{\infty} \leq 0.7$        & $\mathbf{<1}$&  $1.02$  &  $1.02$ & success \\
SOS              &    $||x||_{\infty} \leq 0.8$            & $\mathbf{<1}$  & $1.8$ &  $1.8$ & success  \\
\midrule
\textsc{DITL} (LQR)  &      $||x||_{\infty} \leq 3.0$         & $9.8 \pm 4$ & $8 \pm 3$ & $12.5$  &100\%     \\
\textsc{DITL} (RL)  &      $||x||_{\infty} \leq 3.0$         & $14 \pm 11$ & $\mathbf{9 \pm 3.5}$ & $\textbf{16}$  &100\%     \\
\bottomrule
  \end{tabular}
\end{table}

\noindent\textbf{Cartpole }
For cartpole, we used LQR for initialization, and
set bias terms of $V_\theta$ to zero.
We set $\epsilon=0.1$ (0.01\% of the valid region area) and 
the running time limit to 2 hours for all approaches.
None of the baselines successfully attained a provably stable control policy and associated Lyapunov function for this problem.
Failure was either because we could find counterexamples within the target valid region, or because verification exceeded the time limit.
\textsc{DITL} found a valid region of $\|x\|_\infty \le 1.0$, in $\le 1.6$ hours with a 100\% success rate, and average ROA of $0.021\pm 0.012$.

\noindent\textbf{PVTOL }
The PVTOL setup was similar to cartpole.
We set $\epsilon=0.1$ (0.0001\% of the valid region area), and maximum running time to 24 hours.
None of the baselines successfully identified a provably stable control policy.
In contrast, DITL found one within $13\pm 6$ hours on average, yielding a $100\%$ success rate.
We identified a valid region of $\|x\|_\infty \le 1.0$, and ROA of $0.011 \pm 0.008$.

\section{Conclusion}

We presented a novel algorithmic framework for learning Lyapunov control functions and policies for discrete-time nonlinear dynamical systems.
Our approach combines mixed-integer linear programming verification tool with a training procedure that leverages gradient-based approximate verification.
This combination enables a significant improvement in effectiveness compared to prior art: our experiments demonstrate that our approach yields several factors larger regions of attraction in inverted pendulum and path tracking, and ours is the only approach that successfully finds stable policies in cartpole and PVTOL.

\section*{Acknowledgments}
This research was partially supported by the NSF (grants CNS-1941670, ECCS-2020289, IIS-1905558, IIS-2214141, and CNS-2231257), AFOSR (grant FA9550-22-1-0054), ARO (grant W911NF-19-1-0241), and NVIDIA.

\bibliographystyle{plainnat}
\bibliography{ref}

\newpage
\appendix

\section{Further Details about Experiments}
\subsection{Dynamics}

Since these are described as continuous-time domains, we set the time between state updates to be $0.05s$ in all cases for our discretized versions of these systems.

\paragraph{Inverted Pendulum}

We use the following standard dynamics model for inverted pendulum~\citep{chang2019neural,richards2018lyapunov,zhou2022neural}:
\[
\ddot{\theta}=\frac{m g \ell \sin (\theta)+u-b\dot{\theta}}{m \ell^2},
\]
with gravity constant $g = 9.81$, ball weight $m = 0.15$, friction $b = 0.1$, and pendulum length $l = 0.5$. 
We set the maximum torque allowed to be $6Nm$, meaning $u \in [-6.0, 6.0]$, and it is applied to all experiments except for the NLC (free) case.

\paragraph{Path Tracking}

Our \emph{path tracking} dynamics follows \citet{Snider2009AutomaticSM} and \citet{chang2019neural}:
\[
\begin{aligned}
\dot{s} = \frac{v\cos \left(\theta_{e}\right)}{1-\dot{e}_{r a}\kappa(s)}, \quad  \dot{e}_{r a} = v\sin \left(\theta_{e}\right), 
\dot{\theta}_{e} = \frac{v\tan (\delta)}{L}-\frac{v\kappa(s) \cos \left(\theta_{e}\right)}{1-\dot{e}_{r a} \kappa(s)},
\end{aligned}
\]
where the control policy determines the steering angle $\delta$. $e_{ra}$ is the distance error and $\theta_e$ is the angle error.
We can simplify the dynamics by defining $u=\tan(\delta)$, so that non-linearity of dynamics is only in terms of state variables.
Since in practice $|\delta| \le 40^\circ$, we set $u \in [\tan(-40^\circ), \tan(40^\circ)]$; this is applied to all experiments.
We set speed $v = 2.0$, while the track is a circle with radius $10.0$ (and, thus, $\kappa = 0.1$) and $L=1.0$.

For RL in path tracking, the reward function is $-\frac{1}{10}(e_{ra}^2 + \theta_{e}^2)^{1/2}$, and we use PPO~\citep{schulman2017proximal}, training over 50K steps.

\paragraph{Cartpole}

For \emph{cartpole} we use the  dynamics from \citet{tedrake2009underactuated}.
The original dynamics is
\begin{align}
      \ddot{x} =& \frac{1}{m_c + m_p \sin^2\theta}\left[ f_x + m_p \sin\theta (l \dot\theta^2 + g\cos\theta)\right] \label{eq:ddot_x}\\
      \ddot{\theta} =& \frac{1}{l(m_c + m_p \sin^2\theta)} \Bigr[ -f_x
       \cos\theta - m_p l \dot\theta^2 \cos\theta \sin\theta  - (m_c + m_p) g \sin\theta \Bigr]. \label{eq:ddot_theta}
\end{align}
We change the variable and rename $\pi - \theta$ as $\theta$, so that in our setting, $\theta$ represents the angle between the pole and the upward horizontal direction, the pole angle is positive if it is to the right. The purpose of this transformation is that the equilibrium point is the origin.

Using the second order chain rule \[\diff[2]{y}{z} = \diff[2]{y}{u}\left(\diff{u}{z}\right)^{2} + \diff{y}{u}\diff[2]{u}{z}\]
let $y = \pi - \theta, z = t, u = \theta$, we have the new dynamics as
\begin{align}
 \ddot{x} = &\frac{1}{m_c + m_p \sin^2{\theta}} \left[ f_x + m_p \sin{\theta}(l\dot{\theta}^2 - g\cos{\theta})\right]\\
\ddot{\theta} = & \frac{1}{l(m_c + m_p \sin^2{\theta})}\Bigr[ -f_x \cos{\theta}-m_p l \dot{\theta}^2 \cos{\theta}\sin{\theta} + (m_c + m_p) g\sin{\theta}\Bigr].
\end{align}

Here $m_c = 1.0$ is the weight of the cart, $m_p = 0.1$ is the weight of the pole, $x$ is the horizontal position of the cart, $l = 1.0$ is the length of the pole, $g = 9.81$ is the gravity constant and $f_x$ is the controller (force applied to the cart).  Furthermore, the maximum force allowed to the cart is set to be $30 N$, meaning $f_x \in [-30, 30]$; this is applied to all experiments.

\paragraph{PVTOL}
For \emph{PVTOL}, our dynamics follows \citet{singh2021learning}.
Specifically, the system has the following state representation: $x= \left(p_{x}, p_{z}, v_{x}, v_{z}, \phi, \dot{\phi}\right)$. $\left(p_{x}, p_{z}\right)$ and $\left(v_{x}, v_{z}\right)$ are the $2D$ position and velocity, respectively, and $(\phi, \dot{\phi})$ are the roll and angular rate.
Control $u \in \mathbb{R}_{>0}^{2}$ corresponds to the controlled motor thrusts. The dynamics $f(x,u)$ are described by
$$
\dot{x}(t)=\left[\begin{array}{c}
v_{x} \cos \phi-v_{z} \sin \phi \\
v_{x} \sin \phi+v_{z} \cos \phi \\
\dot{\phi} \\
v_{z} \dot{\phi}-g \sin \phi \\
-v_{x} \dot{\phi}-g \cos \phi \\
0
\end{array}\right]+\left[\begin{array}{cc}
0 & 0 \\
0 & 0 \\
0 & 0 \\
0 & 0 \\
(1 / m) & (1 / m) \\
l / J & (-l / J)
\end{array}\right] u
$$
where $g = 9.8$ is the acceleration due to gravity, $m = 4.0$ is the mass, $l = 0.25$ is the moment-arm of the thrusters, and $J = 0.0475$ is the moment of inertia about the roll axis. The maximum force allowed to both $u_1$ and $u_2$ is set to be $39.2N$, which is $mg$, meaning $u_1, u_2 \in [0, 39.2]$; this is applied to all experiments.

\subsection{Computation of the LQR Solution}
Given the system $dx / dt = Ax + Bu$, the objective function of the LQR is to compute the optimal controller $u = -Kx$ that minimizes the quadratic cost $\int_{0}^{\infty} (x' Q x + u' R u)dt$, where $R$ and $Q$ are identity matrices. The details of the system linearization as well as the LQR solution are described below.
\paragraph{Inverted Pendulum} 
We linearize the system as
\[
A = \begin{bmatrix}
0 & 1 \\
\frac{g}{l} & -\frac{b}{ml^2} 
\end{bmatrix}, B = \begin{bmatrix}
0 \\
\frac{1}{ml^2} 
\end{bmatrix} 
\]
The final LQR solution is $u = -1.97725234 \theta -0.97624064 \dot{\theta}$.

\paragraph{Path Tracking} We linearize the system as 
\[
A = \begin{bmatrix}
0 & 2 \\
0 & -0.04 
\end{bmatrix}, B = \begin{bmatrix}
0 \\
1 
\end{bmatrix}.
\]
The final LQR solution is $u' = 2u + 0.1 = -e_{ra} -2.19642572 \theta_e$.
\paragraph{Cartpole} We linearize the system as 
\[
A = \begin{bmatrix}
0 & 1 & 0 & 0 \\
0 & 0 & -0.98 & 0 \\
0 & 0 & 0 & 1 \\
0 & 0 & 10.78 & 0 
\end{bmatrix}, B = \begin{bmatrix}
0 \\
1 \\
0 \\
-1 
\end{bmatrix}.
\]
The final LQR solution is $ u = x + 2.4109461 \dot{x} +34.36203947\theta  + 10.70094483\dot{\theta}$.

\paragraph{PVTOL}

We linearize the system as 
\[A = \begin{bmatrix}
0 & 0 & 0 & 1 & 0 & 0 \\
0 & 0 & 0 & 0 & 1 & 0 \\
0 & 0 & 0 & 0 & 0 & 1 \\
0 & 0 & -g & 0 & 0 & 0 \\
0 & 0 & 0 & 0 & 0 & 0 \\
0 & 0 & 0 & 0 & 0 & 0 
\end{bmatrix}, B =  \begin{bmatrix}
0 & 0 \\
0 & 0 \\
0 & 0 \\
0 & 0 \\
\frac{1}{m} & \frac{1}{m} \\
\frac{r}{J} & -\frac{r}{J} 
\end{bmatrix}  \] 
The final LQR solution is $u_1 = 0.70710678x -0.70710678y -5.03954871 \theta + 1.10781077\dot{x} -1.82439774\dot{y} -1.20727555\dot{\theta}$, and $u_2 = -0.70710678x -0.70710678y + 5.03954871 \theta -1.10781077 \dot{x} -1.82439774 \dot{y} +1.20727555 \dot{\theta}$.

\subsection{PPO Training for Path Tracking}
We use the implementation in Stable-Baselines \citep{rl-zoo3} for PPO training. The hyperparameters of are fine-tuned using their auto-tuning script with a budget of 1000 trials with a maximum of 50000 steps. The final hyperparameters are in Table \ref{ppo_param}.

\begin{table}[htbp]
\caption{Hyperparameter for PPO (Path Tracking)}
\label{ppo_param}
\begin{center}
\begin{tabular}{ll}
\toprule
Parameter & Value \\
\midrule
policy            & MlpPolicy   \\
n\_timesteps      & !!float 100000 \\
batch\_size       & 32             \\
n\_steps          & 64             \\
gamma             & 0.95           \\
learning\_rate    & 0.000208815    \\
ent\_coef         & 1.90E-06       \\
clip\_range       & 0.1            \\
n\_epochs         & 10             \\
gae\_lambda       & 0.99           \\
max\_grad\_norm   & 0.8            \\
vf\_coef          & 0.550970466    \\
activation\_fn & nn.ReLU\\
log\_std\_init & -0.338380542 \\
ortho\_init & False \\ 
net\_arch & [8, 8]\\
vf & [8, 8] \\
sde\_sample\_freq & 128           \\
\bottomrule
\end{tabular}
\end{center}
\end{table}

\subsection{Benchmark Models}
 When relevant, benchmark models (as well as our approach) are run for 10 seeds (seed 0 to 9). Mean $\pm$ standard deviation are reported in Table \ref{T:pendulum} and Table \ref{pathtracking}. For SOS benchmark, we use polynomials of degree $\leq 6$ for all environments.  

 \paragraph{NLC and UNL} For Cartpole and PVTOL, we train against diameter 1.0 under the $l_2$ norm for NLC and UNL. The main reason for the failure of training is that the dReal verifier becomes incredibly show after some iterations. For example, for seed 0, NLC only finished 167 certifications within 2 hours limit for Cartpole, and 394 certifications within 24 hours limit for PVTOL. 

 \paragraph{LQR and SOS} For Cartpole environment, we first certify against target region  $0.1 \leq||x||_\infty \leq 0.16$ for LQR benchmark where dReal returns the counterexample; we later attempt to certify against $0.1 \leq||x||_\infty \leq 0.15$, dReal did not return any result within the 2-hour limit. For SOS benchmark, dReal verifier did not return any result within the 2 hours limit against target region $0.1 \leq||x||_\infty\leq 0.5$. For PVTOL environment, LQR benchmark returns the counterexample against target region $0.1 \leq||x||_\infty\leq 0.5$ after 10 hours, we then attempt to certify against $0.1 \leq ||x||_\infty \leq 0.4$ and fail to return any result within the remaining 14 hours time limit. For SOS benchmark, dReal verifier did not return any result within the 24 hours limit against target region $0.1 \leq ||x||_\infty \leq 0.5$.

\subsection{Bounding the Functions}
There are three types of bounding in our paper: 1) bound the dynamics $f(x, u)$ with linear upper/lower bound functions for verification; 2) bound the dynamics with constants for the calculation of $M$ for ReLU; 3) bound the dynamics with  constants for the calculation of ROA. Note that the bounding for 2) and 3) are similar, except that for 2) it is calculated for each subgrid, and for 3) it is calculated for the entire valid region.

\paragraph{Inverted Pendulum}
Split the region: we split the region over the domain of $\theta$, please refer to our code for the details of the splitting. %

(1) Bound for verification: since the only non-linear part of the function is $\sin(\theta)$, we use the upper/lower bounds in $\alpha, \beta$-CROWN.
(2) Bound for $M$: $|\ddot{\theta}| \leq \frac{g}{l} + \frac{|u|_\text{max}}{ml^2} + \frac{b|\dot{\theta}|_\text{max}}{m l ^2}$, where $|u|_\text{max}$ is calculated using IBP bound.
(3) Bound for ROA: $||f(x, u)||_\infty = \gamma + \max(\gamma, \frac{mgl + |u|_\text{max} + b \gamma}{m l^2})dt$, where $|u|_\text{max}$ is the maximum force allowed.

\paragraph{Path Tracking}

Split the region: we split the region over the domain of $\theta_e$, please refer to our code for the details of the splitting. %

(1) Bound for verification: for $\dot{e}_{ra}$ since the only nonlinear function is $\sin(\theta)$, we use the bound in $\alpha, \beta$-CROWN; for $\dot{\theta}_e$ we bound the term $\frac{v\kappa(s) \cos \left(\theta_{e}\right)}{1-\dot{e}_{r a} \kappa(s)}$, which is concave in interval $[-\pi/2, \pi/2]$, and convex otherwise.
(2) Bound for $M$: $|\dot{e}_{ra}| \leq v$, $|\dot{\theta}_e| \leq \frac{v|u|_\text{max}}{L} + \frac{v}{R - v}$, where $|u|_\text{max}$ is calculated using IBP bound.
(3) Bound for ROA: $||f(x, u)||_\infty = \gamma + \max(v, \frac{v |u|_\text{max}}{L} + \frac{v}{R - v})dt$, where $|u|_\text{max}$ is the maximum force allowed.

\paragraph{Cartpole} Split the region: we split the region over the domain of $\theta$ and $\dot{\theta}$ with interval 0.1 when either $\theta$ or $\dot{\theta}$ is greater than 0.1. In case when both $\theta$ and $\dot{\theta}$ are smaller than 0.1, we split $\theta, \dot{\theta}$ into intervals of 0.05. Please refer to our code for the details of the splitting.

(1) Bound for verification: we split the dynamics into six functions and bound them separately 
for both one variable and multiple variables: 
${f_1 = \frac{1}{m_c + m_p \sin^2{\theta}}}, {f_2 = \frac{-\cos{\theta}}{l(m_c + m_p \sin^2{\theta})}}, {f_3 = \frac{-g  m_p \sin{\theta}\cos{\theta}}{m_c + m_p \sin^2{\theta}}}$,
${f_4 = \frac{(m_c + m_p) g \sin{\theta}}{l(m_c + m_p \sin^2{\theta})}}, {f_5 = \frac{m_p l \sin{\theta}  \dot{\theta}^2}{m_c + m_p \sin^2{\theta}}}, {f_6 = \frac{-m_p l \dot{\theta}^2 \cos{\theta}\sin{\theta}}{l(m_c + m_p \sin^2{\theta})}}$.
(2) Bound for $M$: $|\ddot{x}| \leq |u|_\text{max} + m_p l |\dot{\theta}|_\text{max}^2 + \frac{m_p g}{2}$, $|\ddot{\theta}| \leq \frac{1}{l}(|u|_\text{max} + \frac{m_p l |\dot{\theta}|_\text{max}^2}{2} + {(m_c + m_p) g})$ where $|u|_\text{max}$ is calculated using IBP bound.
(3) Bound for ROA: $||f(x, u)||_\infty = \gamma + \max(\gamma, |u|_\text{max} + m_p l \gamma^2 + \frac{m_p g}{2}, \frac{1}{l}(|u|_\text{max} + \frac{m_p l \gamma^2}{2} + {(m_c + m_p) g}))dt$, where $|u|_\text{max}$ is the maximum force allowed.

\paragraph{PVTOL}
Split the region: we split the region over the domain of $\theta$  with interval 0.25 and split the region over the domain of $\dot{x}, \dot{y}, \dot{\theta}$ with interval 0.5. Furthermore, in the MILP we add a constraint to ensure the $l_\infty$ norm of state value is no less than 0.1. Please refer to our code for the details of the splitting.

(1) Bound for verification: we split the dynamics into three functions and bound them separately for  multi-variables: $f_1 = x \cos y, f_2 = x \sin y, f_3 = xy$.  
(2) Bound for $M$: we bound each term of the dynamics with the min/max values using the monotonic property.
(3) Bound for ROA: $||f(x, u)||_\infty = \gamma + \max(\gamma, \gamma^2 + g + \frac{2|u|_\text{max}}{m}, 2\gamma, \frac{2l|u|_\text{max}}{J})dt$, where $|u|_\text{max}$ is the maximum force allowed.

\subsection{Speeding up Verification}

When there is a limit on parallel resources, so that verification must be done serially for the subgrids $k$, we can reduce the number of grids to verify in each step as follows.
The first time a verifier is called, we perform it over all subgrids; generally, only a subset will return a counterexample.
Subsequently, we only verify these subgrids until none return a counterexample, and only then attempt full verification.

\subsection{ROA Calculation}
For Inverted Pendulum and Path Tracking, we approximate ROA using a grid of 2000 cells along each coordinate. For Cartpole, we use a grid of 150 cells along each coordinate, and for PVTOL, we use a grid of 50 cells along each coordinate.

For our baselines where the systems are continuous, the set levels are $\{x \in \mathcal{R}\mid V(x) < \rho\}$ where $\rho = \min_{x \in \partial \mathcal{R}}V(x)$. 
Note that for discrete-time systems, having $\rho = \min_{x \in \partial \mathcal{R}}V(x)$ is not sufficient. For example, assume $\mathcal{R}$ is the valid region, any point outside of $\mathcal{R}$ satisfy $V(f(x,u)) > V(x)$, and $\forall x \in \partial \mathcal{R}, V(x) = \rho $. Assume we have $x \in \mathcal{R}\setminus\partial \mathcal{R}$ and  $f(x, u) = y$, $y \notin \mathcal{R}$ (this is possible since the system is discrete). Since $x$ is in the valid region, we have $V(y) <  V(x) < \rho$. However, since $y$ is no longer within the valid region, assume $V(f(y,u)) >\rho > V(y)$, the system will not converge to equilibrium point starting from point $y$. This means $\{x \in \mathcal{R}\mid V(x) < \rho\}$ is not a subset of ROA.

\section{Computing Linear Bounds on Lipschitz-Continuous Dynamics}
\label{S:lipschitz}
\begin{theorem}
    Given function $g(x): \mathbb{R}^n\mapsto\mathbb{R}$ where the Lipschitz constant is $\lambda$,
    we have  $g(x)  \leq g(x_{1,l}, x_{2,l},\ldots,x_{n,l}) + \lambda \cdot \sum_{i} (x_i - x_{i,l})$ and  $  g(x_{1,l}, x_{2,l},\ldots,x_{n,l}) - \lambda \cdot \sum_{i} (x_i - x_{i,l}) \leq g(x)$, where $x_i \in [x_{i,l}, x_{i,u}], \forall i \in [n]$.
\end{theorem}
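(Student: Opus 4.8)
The plan is to reduce the claimed two-sided inequality to the single estimate $|g(x) - g(x_l)| \le \lambda \sum_i (x_i - x_{i,l})$, where $x_l = (x_{1,l},\ldots,x_{n,l})$ is the lower corner of the hyperrectangle, and then obtain this estimate directly from the Lipschitz property together with an elementary norm comparison. Once we have $|g(x)-g(x_l)| \le \lambda\sum_i(x_i-x_{i,l})$, splitting the absolute value into its two one-sided forms gives precisely the stated upper bound $g(x) \le g(x_l) + \lambda\sum_i(x_i-x_{i,l})$ and lower bound $g(x) \ge g(x_l) - \lambda\sum_i(x_i-x_{i,l})$.

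The key step is the norm comparison. Since $x_i \in [x_{i,l},x_{i,u}]$, each difference $x_i - x_{i,l}$ is nonnegative, hence $\|x - x_l\|_\infty = \max_i |x_i - x_{i,l}| = \max_i (x_i - x_{i,l}) \le \sum_i (x_i - x_{i,l})$, the last inequality holding because a maximum of nonnegative terms is at most their sum. Applying the $\ell_\infty$-Lipschitz bound $|g(x) - g(x_l)| \le \lambda\|x - x_l\|_\infty$ then yields the desired estimate. As a remark, the same conclusion follows without committing to the $\ell_\infty$ norm via a telescoping argument: set $y^{(0)} = x_l$ and $y^{(k)} = (x_1,\ldots,x_k,x_{k+1,l},\ldots,x_{n,l})$ so that $y^{(n)} = x$, write $g(x) - g(x_l) = \sum_{k=1}^n \big(g(y^{(k)}) - g(y^{(k-1)})\big)$, note that consecutive points differ only in coordinate $k$ by the nonnegative amount $x_k - x_{k,l}$, so each summand is bounded in absolute value by $\lambda(x_k - x_{k,l})$, and conclude by the triangle inequality.

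There is no genuine obstacle in this proof; the only point that warrants care is making explicit that the nonnegativity of $x_i - x_{i,l}$ on the hyperrectangle is what allows us to replace $\|x-x_l\|_\infty$ (and the absolute values) by the affine expression $\sum_i(x_i - x_{i,l})$, so that the resulting bounds are genuinely \emph{linear} functions of $x$ — which is exactly what is required for the MILP relaxation of Constraint~\eqref{C:left2} in Section~\ref{S:verify}.
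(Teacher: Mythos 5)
Your proof is correct, and your main argument takes a genuinely more direct route than the paper. The paper proceeds coordinate by coordinate: for each $i$ it does a two-case comparison of $g(x_i,x_{-i})$ with $g(x_{i,l},x_{-i})$ to establish the one-dimensional bounds $g(x_{i,l},x_{-i}) - \lambda(x_i - x_{i,l}) \le g(x_i,x_{-i}) \le g(x_{i,l},x_{-i}) + \lambda(x_i - x_{i,l})$, and then applies this ``recursively'' along all $n$ dimensions --- which is exactly the telescoping argument you sketch in your remark, just written out with explicit case splits instead of the triangle inequality. Your primary argument instead uses the $\ell_\infty$ Lipschitz property in one shot, $|g(x)-g(x_l)| \le \lambda\|x-x_l\|_\infty = \lambda\max_i(x_i-x_{i,l}) \le \lambda\sum_i(x_i-x_{i,l})$, which is shorter, makes visible that the sum is only needed to keep the bound \emph{linear} (the max gives a sharper but nonlinear bound), and matches the paper's own convention that $\lambda$ is the $\ell_\infty$ Lipschitz constant of the dynamics. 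The paper's coordinate-wise route has the mild advantage of needing only Lipschitz continuity in each coordinate separately (so it is agnostic to which norm $\lambda$ is taken in), but your telescoping remark recovers that generality anyway; both arguments are complete and correct.
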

\begin{proof}
For any $i \in [n]$, given $x_i \in [x_{i,l}, x_{i,u}]$, for arbitrary fixed values along dimension $[n]\setminus \{j\}$ (we denote them as $x_{-i}$), we have $|g(x_i, x_{-i}) - g(x_{i,l}, x_{-i})| \leq \lambda |x_i - x_{i,l}|$.
\begin{itemize}
    \item we show that $g(x_i, x_{-i})  \leq g(x_{i,l}, x_{-i}) + \lambda (x_i - x_{i,l})$
    \begin{itemize}
        \item if $g(x_i, x_{-i}) \geq g(x_{i,l}, x_{-i})$, then $g(x_i, x_{-i}) - g(x_{i,l}, x_{-i}) \leq \lambda (x_i - x_{i,l})$, meaning $g(x_i, x_{-i})  \leq g(x_{i,l}, x_{-i}) + \lambda (x_i - x_{i,l})$.
        \item if $g(x_i, x_{-i}) < g(x_{i,l}, x_{-i})$, since $\lambda (x_i - x_{i,l}) \geq 0$, we have $g(x_i, x_{-i})  \leq g(x_{i,l}, x_{-i}) + \lambda (x_i - x_{i,l})$.  
    \end{itemize}
    \item we show that $g(x_{i,l}, x_{-i})-\lambda (x_i - x_{i,l}) \leq g(x_i, x_{-i})$
    \begin{itemize}
        \item if $g(x_i, x_{-i}) \geq g(x_{i,l}, x_{-i})$, then since  $-\lambda (x_i - x_{i,l}) \leq 0$, we have $g(x_{i,l}, x_{-i})-\lambda (x_i - x_{i,l}) \leq g(x_i, x_{-i})$.
        \item if $g(x_i, x_{-i}) < g(x_{i,l}, x_{-i})$, then $g(x_{i,l}, x_{-i}) - g(x_i, x_{-i}) \leq \lambda (x_i - x_{i,l})$, meaning $g(x_{i,l}, x_{-i})-\lambda (x_i - x_{i,l}) \leq g(x_i, x_{-i})$.
    \end{itemize}
\end{itemize}
By applying the above result along all $n$ dimensions recursively, we have the upper bound as $g(x)  \leq g(x_{1,l}, x_{2,l},\ldots,x_{n,l}) + \sum_{i}\lambda(x_i - x_{i,l})$, and the lower bound  as $  g(x_{1,l}, x_{2,l},\ldots,x_{n,l}) - \sum_{i}\lambda (x_i - x_{i,l}) \leq g(x)$.
\end{proof}

\section{Ablation Study}
\label{S:ablation}

In this section, we conduct an ablation study on the MILP solver.
Table \ref{ab:milp} are the results for DITL-dReal, in which we use our framework, but replace all the MILP components with dReal. As we can see, using MILP is an essential ingredient in the success of the proposed approach:

\begin{table*}[htbp]
\caption{Ablation study on MILP solver.}
\label{ab:milp}
\begin{center}
\small
\begin{tabular}{lllll}
\toprule
Environment                             & Runtime                  & ROA               & Max ROA & Success Rate \\
\midrule
Inverted   Pendulum (DITL-dReal)        & \textbf{6.0 $\pm$ 1.7 (s)}        & 57 $\pm$ 24       & 75      & \textbf{100\% }           \\
\textbf{Inverted   Pendulum (DITL, main paper)}  & {8.1 $\pm$ 4.7(s)}         & \textbf{61 $\pm$ 31}       & \textbf{123}     & \textbf{100\% }           \\\cmidrule(lr){1-1}
Path   Tracking (RL, DITL-dReal)        & 600 $\pm$ 0 (s)          & 0 $\pm$ 0         & 0       & 0\%            \\
\textbf{Path   Tracking (RL, DITL, main paper)}  & \textbf{14 $\pm$ 11 (s)}          & \textbf{9 $\pm$ 3.5}       & \textbf{16}      & \textbf{100\% }           \\\cmidrule(lr){1-1}
Path   Tracking (LQR, DITL-dReal)       & 420.9 $\pm$ 182   (s)    & 4 $\pm$ 4         & 11      & 60\%          \\
\textbf{Path   Tracking (LQR, DITL, main paper)} & \textbf{9.8 $\pm$ 4 (s)}          & \textbf{8 $\pm$ 3}         & \textbf{12.5}    & \textbf{100\% }           \\\cmidrule(lr){1-1}
Cartpole   (DITL-dReal)                 & \textgreater{}2 (hours)  & N/A               & N/A     & 0\%            \\
\textbf{Cartpole   (DITL, main paper) }          & \textbf{0.9 $\pm$ 0.3 (hours)}    & \textbf{0.021 $\pm$ 0.012} & \textbf{0.045}   & \textbf{100\% }            \\\cmidrule(lr){1-1}
PVTOL   (DITL-dReal)                    & \textgreater{}24 (hours) & N/A               & N/A     & 0\%            \\
\textbf{PVTOL   (DITL, main paper)  }            & \textbf{13 $\pm$ 6 (hours)}       & \textbf{0.011 $\pm$ 0.008} & \textbf{0.028}   & \textbf{100\% } \\
\bottomrule
\end{tabular}
\end{center}
\end{table*}

\end{document}